\documentclass{article}

\usepackage[preprint]{neurips_2026}

\usepackage[utf8]{inputenc} 
\usepackage[T1]{fontenc}    
\usepackage{hyperref}       
\usepackage{url}            
\usepackage{booktabs}       
\usepackage{amsfonts}       
\usepackage{nicefrac}       
\usepackage{microtype}      
\usepackage{xcolor}         
\usepackage{amsmath}
\usepackage{graphicx}
\usepackage{algorithm, algorithmic}
\usepackage{multirow}
\usepackage[most]{tcolorbox}
\usepackage{fontawesome5}
\usepackage{amsmath,amssymb}
\usepackage{wrapfig}
\usepackage{caption}

\usepackage[table]{xcolor}
\usepackage{colortbl}
\usepackage{pifont} 
\usepackage{graphicx}
\usepackage{booktabs}

\usepackage{amsthm}
\newtheorem{definition}{Definition}

\usepackage{amsmath, amssymb, amsthm}
\usepackage{geometry}
\newtheorem{assumption}{Assumption}[section]
\newtheorem{theorem}{Theorem}[section]
\newtheorem{lemma}{Lemma}[section]

\definecolor{HeaderBlue}{HTML}{2E4053} 
\definecolor{RowGray}{HTML}{F8F9F9}    

\definecolor{OursBule}{RGB}{230,240,255}
\definecolor{OursBg}{HTML}{E8F6F3}     
\definecolor{BestRed}{HTML}{C0392B}    
\definecolor{SecondBlue}{HTML}{2980B9} 

\definecolor{GainGreen}{HTML}{27AE60}  
\newcommand{\gain}[1]{\textbf{\textcolor{GainGreen}{+#1}}} 

\newcommand{\best}[1]{\textbf{\textcolor{BestRed}{#1}}}
\newcommand{\second}[1]{\textbf{\textcolor{SecondBlue}{#1}}}
\newcommand{\val}[2]{#1 \scalebox{0.85}{(#2)}}

\definecolor{myblue}{HTML}{4DBCC9}

\newtheorem{proposition}{Proposition}

\definecolor{AbHeader}{HTML}{2E4053} 
\definecolor{AbOurs}{HTML}{E8F6F3}   
\definecolor{AbBest}{HTML}{C0392B}   

\newtcolorbox{bluebox}{
  colframe=myblue!50!white,
  colback=myblue!15!white,
  arc=0mm,
  boxrule=0mm,
  leftrule=1mm,
  left=1mm,
  right=1mm,
  top=1mm,
  bottom=1mm,
  breakable
}
\title{SkillSmith: Co-Evolving Skills and Tools for Self-Improving Agent Systems}

\author{%
\parbox{\textwidth}{\centering
\textbf{Yangbo Wei}$^{1,2}$ \quad
\textbf{Zhen Huang}$^{2,3}$ \quad
\textbf{Shaoqiang Lu}$^{1,2}$ \quad
\textbf{Junhong Qian}$^{4}$ \\
\textbf{Qifan Wang}$^{1,2}$ \quad
\textbf{Chen Wu}$^{2,5}$ \quad
\textbf{Lei He}$^{2}$\\[0.7em]
{\small\normalfont
$^{1}$Shanghai Jiao Tong University, Shanghai, China\\
$^{2}$Eastern Institute of Technology, Ningbo, China\\
$^{3}$University of Science and Technology of China, Hefei, China\\
$^{4}$Southeast University, Nanjing, China\\
$^{5}$Ningbo Institute of Digital Twin, Ningbo, China
}
}
}

\begin{document}

\maketitle

\begin{abstract}
Recent self-evolving agents have shown that skills can be discovered, refined, and accumulated through execution. However, existing skill evolution frameworks typically assume a fixed tool layer and evaluate each skill independently, limiting their ability to repair tool-level failures or reason about interactions among skills. We propose \textsc{SkillSmith}, a synergy-aware skill--tool co-evolution framework. \textsc{SkillSmith} introduces a unified proposal space where reflection produces atomic bundles that jointly modify skills and tools---allowing tools to be wrapped, edited, composed, split, or retired when skill evolution identifies a reusable capability gap. To guide this joint search, \textsc{SkillSmith} maintains an ecological utility model inspired by Lotka--Volterra dynamics, where an interaction matrix estimated from execution traces captures pairwise complementarity and conflict among skills and provides pressure signals for retrieval, mutation prioritization, and retirement. Furthermore, \textsc{SkillSmith} records anti-patterns---failure signatures, attributions, and remedies---to accelerate diagnosis and veto proposals that repeat known mistakes. Experiments on three benchmarks (e.g., WildClawBench) and five Qwen3.5 scales show that \textsc{SkillSmith} consistently outperforms strong baselines, with gains that amplify as task complexity and multi-skill co-activation increase. 

\end{abstract}


\section{Introduction}

Tool use is now a foundational principle in large language model (LLM)-driven agents. 
Early systems, such as WebGPT~\cite{nakano2021webgpt}, MRKL~\cite{karpas2022mrkl}, ReAct~\cite{yao2022react}, and Toolformer~\cite{schick2023toolformer}, pioneered the use of external APIs and environmental interactions to extend LLMs beyond pure text generation.
This paradigm persists in modern agent architectures~\cite{wang2024survey,xi2025rise,huang2024understanding}.
Instead of relying on isolated tool calls, recent systems increasingly organize reusable capabilities into \emph{Skills}: structured capability packages comprising workflow instructions, executable scripts, domain reference materials, and metadata~\cite{zhang2026equipping}. 
Consequently, at time $t$, the Skill library $\mathcal{S}_t$ serves as an external, non-parametric state object maintained outside the model weights, functioning much like externally accessible memory in retrieval-augmented generation~\cite{lewis2020retrieval}.

Despite the growing adoption of Skills, mechanisms for generating, repairing, and continuously improving them remain underexplored. Existing prompt-optimization methods, such as automatic prompt engineering~\cite{zhou2022large}, ProTeGi~\cite{pryzant2023automatic}, and optimization-by-LLM approaches~\cite{yang2023large}, primarily search over natural-language instructions and other individual components. 
In contrast, optimizing a Skill library requires searching over a discrete, structured, and multi-component external policy space.


Recent work has begun to automate Skill generation and improvement.
Trace2Skill~\cite{ni2026trace2skill} and SkillX~\cite{wang2026skillx} extract transferable skill knowledge bases offline from execution traces. Online methods, including CoEvoSkills~\cite{zhang2026evoskills} explores adversarial co-evolution between a Skill generator and a verifier.
SkillClaw~\cite{ma2026skillclaw} extends Skill evolution from private agent memory to a shared multi-user asset layer.
EvoSkill~\cite{alzubi2026evoskill} discovers and optimizes Skills through failure-driven analysis and Pareto-front selection, while D2Skill~\cite{tu2026dynamic} models Skill utility using an explicitly computable credit signal. 
Concurrently, GEPA~\cite{agrawal2025gepa} shows that reflective textual evolution can be substantially more sample-efficient than scalar-reward reinforcement learning.
Together, these studies~\cite{xia2026skillrl,liu2026skillforge,jiang2026xskill} suggest that Skills are becoming a non-parametric, versionable, and governable external policy layer for LLM agents.

\begin{figure}[t]
    \centering
    \includegraphics[width=\linewidth]{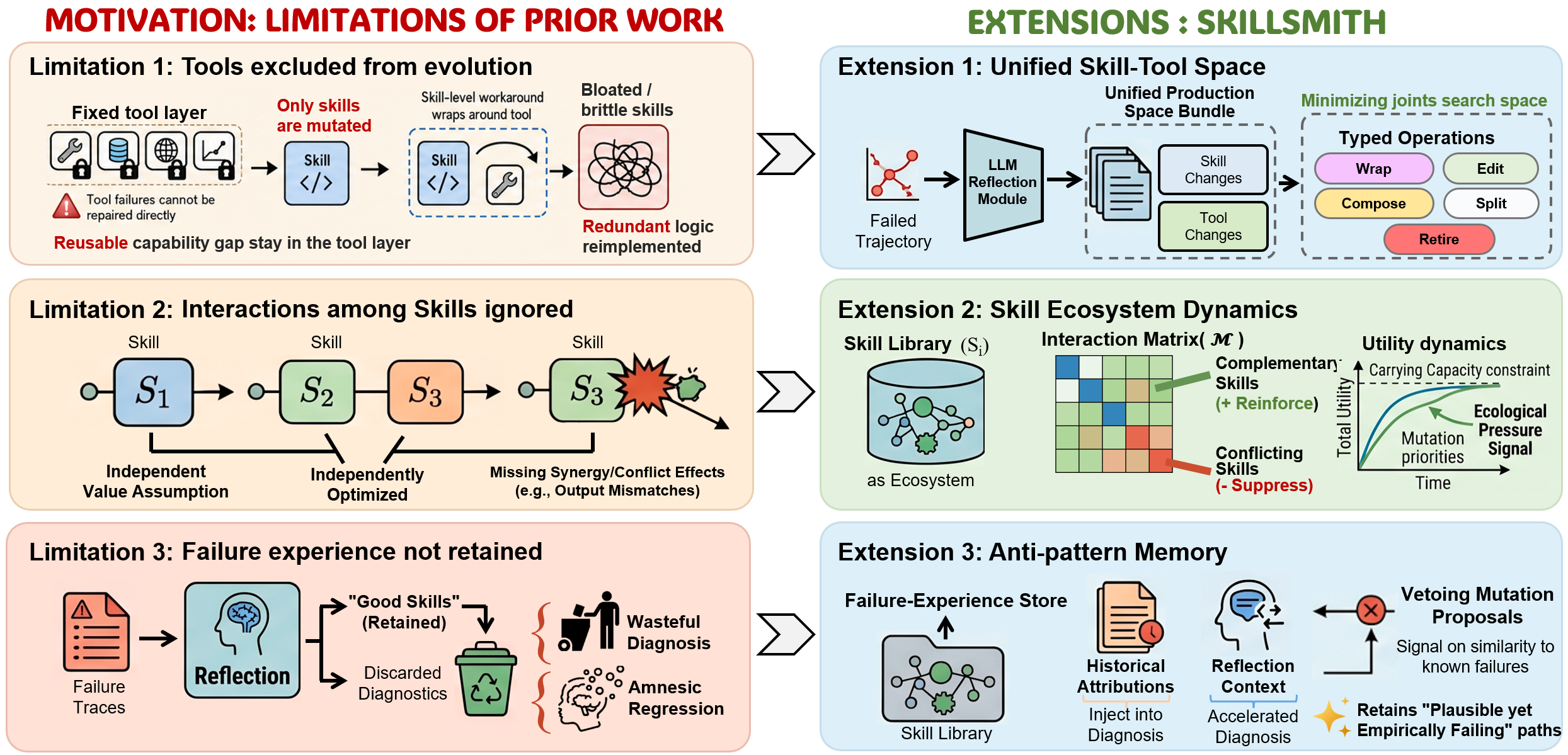}
    \caption{
    Motivation of SkillSmith. Prior skill-evolution methods fix tools, ignore skill interactions, and forget failure diagnostics; SkillSmith addresses these limitations via \textbf{skill--tool co-evolution}, \textbf{ecosystem dynamics}, and \textbf{anti-pattern memory}, enabling robust agent self-improvement.
    }
    \label{fig:motivation}
    \vspace{-20pt}
\end{figure}

However, improvements are often ambiguous: failures may arise from tool deficiencies, conflicting Skills, or unrecorded errors, rather than from the Skills themselves.
For instance, a question-answering agent may fail to retrieve a document not due to an incorrect workflow, but because a search tool is outdated or multiple Skills interfere. 
Addressing such failures solely by rewriting Skill risks producing illusory gains while leaving the true causes unresolved.

These considerations highlight three core limitations in current Skill-evolution systems, as illustrated in Fig.~\ref{fig:motivation}.
\textbf{First, the tool layer is typically treated as fixed.} 
Failures caused by missing, brittle, or mis-specified tools force systems to overcompensate at the Skill layer, leading to bloated and error-prone Skills. 
\textbf{Second, Skill utility is often evaluated in isolation.}
Existing systems mutate only one Skill at a time, implicitly assuming that each Skill's value is independent of the others. 
Yet, in realistic agent execution, multiple Skills are retrieved, composed, and applied simultaneously. 
\textbf{Third, failure experience is rarely retained.}
Failed trajectories are typically consumed once during reflection, preserving only successful artifacts, which can result in repeated error diagnosis and inadvertent regeneration of retired Skills, wasting computation and slowing evolution.

To address these challenges, we propose \textsc{SkillSmith}, a synergy-aware Skill--Tool co-evolution framework that jointly evolves Skills, Tools, and failure memory through three key extensions.



\textbf{(1) Bundle-based Skill--Tool co-evolution.}
Given failed execution traces, \textsc{SkillSmith} generates an atomic proposal bundle that jointly updates Skills and Tools.
The bundle may revise Skills and apply typed Tool lifecycle operations---\textsc{wrap}, \textsc{edit}, \textsc{compose}, \textsc{split}, and \textsc{retire}---to repair, refactor, or retire existing Tools while ensuring safe and controlled tool evolution.
Atomicity ensures that interdependent Skill and Tool changes take effect simultaneously, avoiding invalid intermediate states and allowing \textsc{SkillSmith} to address tool-level bottlenecks without bloating Skill descriptions.


\textbf{(2) Synergy-aware Skill ecosystem dynamics.} 
Inspired by Lotka--Volterra competition--mutualism equations, \textsc{SkillSmith} estimates a Skill interaction matrix from execution logs.
Positive values indicate complementarity, negative values indicate interference, while global capacity captures competition under limited retrieval, context, and maintenance budgets.
The resulting utility dynamics prioritize Skills for mutation not only by isolated performance, but also by whether they suppress useful neighbors, duplicate capabilities, or lack necessary complements.


\textbf{(3) Anti-pattern memory.} 
The anti-pattern memory records verified failure modes and their attributions. During reflection, it accelerates diagnosis by retrieving similar past failures and vetoes mutation proposals that resemble known patterns, preventing previously failed Skill--Tool configurations from recurring. Thus, the agent accumulates both what works and what should be avoided.


We evaluate \textsc{SkillSmith} on OfficeQA, SealQA, and WildClawBench across five Qwen3.5 scales (9B--397B). At 397B, SkillSmith achieves 80.1\% on OfficeQA (+18.3\%) and 49.5\% on SealQA (+18.9\%), with gains growing monotonically with model size. On WildClawBench, SkillSmith sustains improvement through Day 6 where skill-only baselines plateau by Day 2--3, with the largest advantage on tool-intensive categories. Ablations confirm that locking the tool layer causes the single largest drop ($-$6.8\% on WCB), removing ecological governance inflates the library to 21+7 components, and removing anti-pattern memory triples the regression rate.

\section{Preliminaries and Problem Formulation}

\paragraph{\textnormal{\textbf{Agent System as External State.}}}
We consider an LLM-driven agent whose capabilities are determined jointly by model weights and a collection of external, non-parametric assets maintained outside the model. At time $t$, the external state is a triple
\begin{equation}\label{eq:state}
  \Sigma_t = (\mathcal{S}_t,\;\mathcal{T}_t,\;\mathcal{F}_t),
\end{equation}
comprising three components defined below. All three components are non-parametric. The resulting assets can be inspected, versioned, and transferred across models.

\emph{Skills.}\;
A skill $s=(m,w,r,u)$ packages a task-facing strategy: metadata~$m$ (name, trigger condition, version), a workflow body~$w$ (orchestration logic and step-level instructions), reference resources~$r$ (templates, domain knowledge), and a scalar utility estimate~$u$. The skill library $\mathcal{S}_t=\{s_1,\dots,s_n\}$ tells the agent \emph{how to approach a class of tasks}.

\emph{Tools.}\;
A tool $\tau=(d,f,\sigma)$ exposes an atomic operation: an interface description~$d$, an executable implementation~$f$, and a type signature~$\sigma$ (input/output format contract). The tool library $\mathcal{T}_t=\{\tau_1,\dots,\tau_m\}$ defines \emph{what operations the agent can perform on the environment}.

\begin{bluebox}{}
\faLightbulb[regular] \hspace{0.3em}  Skills and tools are complementary: skills orchestrate multi-step plans; tools execute individual steps. A skill may reference multiple tools, and a tool may be invoked by multiple skills. Existing skill-evolution systems treat $\mathcal{T}_t$ as fixed and update only $\mathcal{S}_t$; SkillSmith evolves both jointly.
\end{bluebox}


\emph{Anti-pattern memory.}\;
The third component, $\mathcal{F}_t$, stores structured failure records. Each entry $\phi=(p,a,c)$ contains a failure signature~$p$, a causal attribution~$a$, and a remedy~$c$. 


\paragraph{\textnormal{\textbf{Execution Model.}}}
Given a task instance~$x$, the agent first retrieves a compatible skill subset $\mathcal{S}_{\mathrm{act}}(x)\subseteq\mathcal{S}_t$, then orchestrates a multi-step plan by invoking tools from $\mathcal{T}_t$, producing a trajectory $\xi=(a_1,o_1,\dots,a_L,o_L)$ of actions and observations, and finally receives a black-box task score $P(x)\in[0,1]$. Crucially, the outcome is jointly determined by skill quality, tool capability, and their interaction: a failure may originate in a flawed workflow, an inadequate tool, or an incompatibility between co-activated skills. Systems that restrict reflection to the skill layer alone must compensate for tool-level deficiencies through increasingly complex workarounds.

\paragraph{\textnormal{\textbf{Problem Formulation.}}}
The system receives an initial state $\Sigma_0=(\mathcal{S}_0,\mathcal{T}_0,\mathcal{F}_0)$, a training set $D_{\mathrm{train}}$, a held-out validation set $D_{\mathrm{val}}$, and a total execution budget~$B$. At each iteration~$t$, the system produces a \emph{bundle proposal} $\mathcal{L}_t$---an atomic package of typed skill and tool modifications. Applying $\mathcal{L}_t$ transitions the system state via an update operator~$U$:
\begin{equation}\label{eq:transition}
  \Sigma_{t+1} = U(\Sigma_t,\;\mathcal{L}_t).
\end{equation}
The goal is to find a proposal sequence $\{\mathcal{L}_t\}_{t=0}^{T-1}$ that maximizes held-out performance subject to the budget constraint:
\begin{equation}\label{eq:objective}
  \max_{\{\mathcal{L}_t\}_{t=0}^{T-1}}\;
  \mathbb{E}_{x\sim D_{\mathrm{val}}}\!\bigl[P(x\mid\pi,\Sigma_T)\bigr]
  \quad\text{s.t.}\quad
  \Sigma_{t+1}=U(\Sigma_t,\mathcal{L}_t),\;\;
  \textstyle\sum_{t=0}^{T-1}b(\mathcal{L}_t)\le B,
\end{equation}
where $b(\mathcal{L}_t)$ denotes the execution cost (number of rollouts) consumed by proposing and validating~$\mathcal{L}_t$.


\section{SkillSmith}\label{sec:method}

SkillSmith formalizes the capability accumulation process that engineers undergo through long-term practice: beyond revising operational procedures, they also write reusable tools, document failure lessons, and gradually identify synergies and conflicts among tools and workflows. Correspondingly, SkillSmith represents the agent's external state as $\Sigma_t = (\mathcal{S}_t, \mathcal{T}_t, \mathcal{F}_t)$, and jointly updates Skills, Tools, and anti-pattern memory across iterations, enabling the system to move from local strategy patching toward sustainable capability evolution, as shown in Fig.~\ref{fig:skillsmith_overview}.

SkillSmith receives an initial agent system $\Sigma_0 = (\mathcal{S}_0, \mathcal{T}_0, \mathcal{F}_0)$, a training set $D_{\mathrm{train}}$, and an execution budget $B$. Each iteration comprises four stages. We now walk through the complete path.


\begin{figure*}[t]
    \centering
    \includegraphics[width=\textwidth]{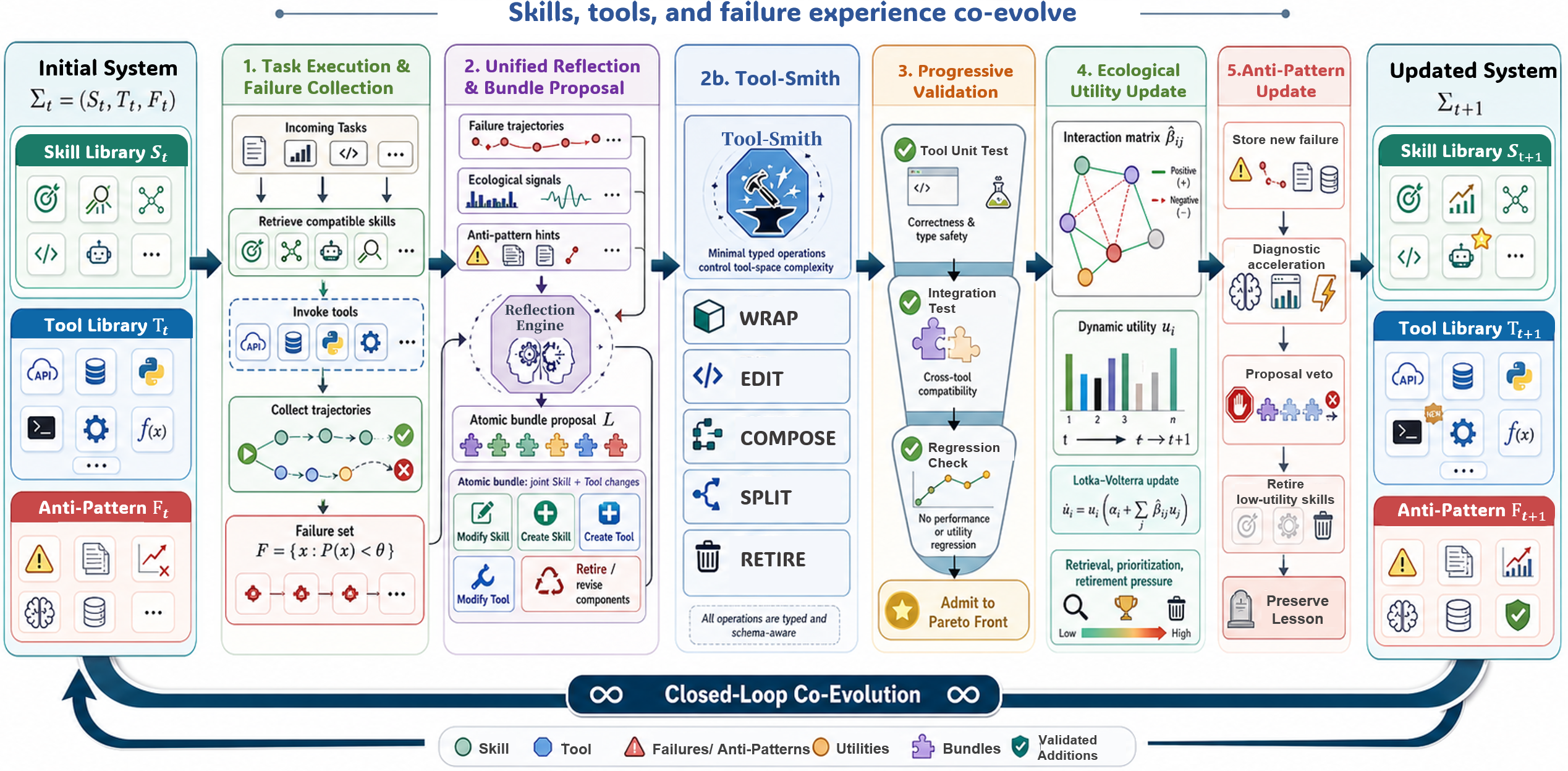}
\caption{Overview of SkillSmith. Each iteration executes tasks, collects failures, and applies validated atomic skill--tool bundle updates, while ecological modeling and anti-pattern memory guide improvements and preserve lessons for the next co-evolution cycle.}
    \label{fig:skillsmith_overview}
    \vspace{-10pt}
\end{figure*}

\subsection{Unified Reflection and Proposal Planning}\label{sec:proposer}




Each iteration begins by sampling a candidate configuration $\Sigma_k$ from the Pareto front $\mathcal{G}$, executing tasks on a minibatch $M$, and collecting the failure set $F = \{x \in M : P(x) < \theta\}$. We build on prior approaches such as GEPA and EvoSkill, which use reflection to modify prompts or individual Skills, and extend them to a unified output space that jointly proposes Skill and Tool updates. SkillSmith's proposer $\mathcal{R}$ outputs a \emph{proposal plan} $\mathcal{L}$—an arbitrary combination of updating or creating Skills and Tools, submitted as an atomic bundle. Atomicity guarantees that interdependent changes take effect simultaneously: creating a new Tool without updating the Skill that invokes it is analogous to installing new software without updating the configuration files.


Before reflection, $\mathcal{R}$ receives three auxiliary signals to guide proposal planning. 
First, the anti-pattern memory $\mathcal{F}_t$ (\S\ref{app:antipattern}) retrieves matched failure records and injects prior attributions into the diagnostic context, helping the system avoid previously failed Skill--Tool configurations. 
Second, ecological dynamics (\S\ref{sec:ecology}) provide utility trends $\Delta u_i^{(t)}$ and interaction estimates $\hat{\beta}_{ij}^{(t)}$, indicating declining Skills or conflicting Skill pairs. 
Third, a feedback function $\mu_f$ extracts structured diagnostics beyond the scalar score $P(x)$, such as compiler errors, missing-document reports, or constraint violations. 
Together, these signals give $\mathcal{R}$ fine-grained attribution, enabling informed bundle proposals; when $\mu_f$ is unavailable, reflection falls back to trajectory-only diagnosis.

Based on these signals, $\mathcal{R}$ analyzes failures along two dimensions. \emph{Vertically}: does the failure stem from a Skill orchestration defect or from insufficient Tool capability? This determines which types of actions $\mathcal{L}$ should contain. \emph{Horizontally}: does the failure involve interactions among multiple Skills? Negative $\hat{\beta}_{ij}$ values flag potential conflicts; the trend of $\Delta u_i$ flags Skills that are losing their ecological niche. Single-point defects produce single-point changes; multi-component interaction problems produce bundle proposals.

$\mathcal{R}$ also uses $\Delta u_i^{(t)}$ for priority ranking. Skills whose utility is declining rapidly are the most urgent repair targets; Skills with steadily rising utility should not be unnecessarily perturbed. This concentrates the limited mutation budget on the components that truly need intervention.

When Tool changes are required, the Tool-Smith $\mathcal{B}_\tau$ is activated. To prevent the joint Skill--Tool search space from exploding, SkillSmith does not allow Tool-Smith to freely generate arbitrary tool operations; instead, tool evolution is restricted to a minimal set of typed lifecycle primitives. Following the principle of ``few tools, strong composition'' from minimalist agent design, Tool-Smith supports only five operation types: \textsc{Wrap}, \textsc{Edit}, \textsc{Compose}, \textsc{Split}, and \textsc{Retire} (the complete list is provided in Appendix~\ref{app:toolsmith}). Finally, $\mathcal{L}$ undergoes an anti-pattern veto check (\S\ref{app:antipattern}) before submission, preventing the system from repeating known mistakes.


\paragraph{Synergy-aware merge.} 
SkillSmith also implements a crossover operator that merges two non-dominated system states $\Sigma_i, \Sigma_j \in \mathcal{G}$ sharing a common ancestor $\Sigma_a$ but differing in Skill subsets. For each Skill $s_k$, the merged version $s_k'$ is taken from the lineage that changed it relative to the ancestor, or from the higher-utility version if both diverged: $s_k' = s_k^{(j)}$ if $s_k^{(i)} = s_k^{(a)} \neq s_k^{(j)}$, $s_k' = s_k^{(i)}$ if $s_k^{(j)} = s_k^{(a)} \neq s_k^{(i)}$, and $s_k' = \arg\max(u(s_k^{(i)}), u(s_k^{(j)}))$ otherwise. Before committing, any pair of newly combined Skills with $\hat{\beta}_{ij}^{(t)} < -\beta_{\mathrm{thresh}}$ is flagged for extended validation. Merge is applied only to sufficiently diverse lineages; otherwise, the iteration defaults to mutation.

\begin{bluebox}{}
\faLightbulb[regular] \hspace{0.3em}  \textbf{Why does the unified output space matter?} The unified output space changes the connectivity of the valid search space. Let $\Omega$ denote the set of agent states satisfying Skill--Tool dependency constraints. Since single-type mutation operators are not closed over $\Omega$, a sequence of individually valid edits may require intermediate states outside $\Omega$, which are rejected by validation-gated evolution. SkillSmith instead defines a bundle proposal as an atomic transition operator, $\mathcal{L}:\Sigma_t\mapsto\Sigma_{t+1}$ with $\Sigma_t,\Sigma_{t+1}\in\Omega$. Thus, bundle mutation expands the set of reachable improvements under validity constraints, rather than merely increasing edit size.
\end{bluebox}


\begin{figure*}[t]
    \centering
    \includegraphics[width=\textwidth]{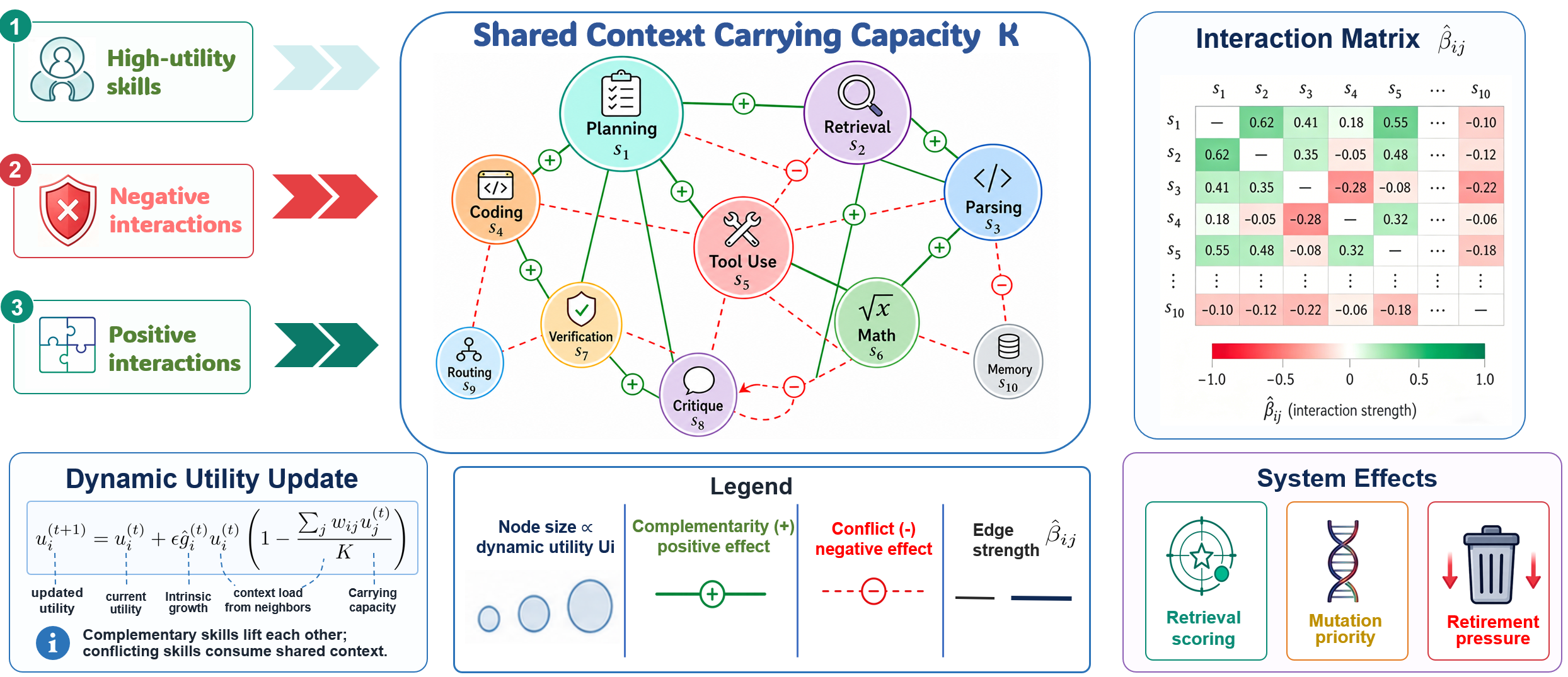}
    \caption{Skill Ecosystem Dynamics. SkillSmith models skills as an interacting ecosystem with shared context capacity. Positive interactions promote complementary skill bundles, while negative interactions suppress redundant or conflicting skills. The resulting dynamic utilities guide retrieval, mutation prioritization, and retirement decisions.}
\label{fig:ecological_utility_dynamics}
    \vspace{-10pt}
\end{figure*}

\subsection{Skill Ecosystem Dynamics}\label{sec:ecology}

Existing Skill evolution methods estimate a Skill's value by its historical average, but Skills rarely act in isolation. SkillSmith treats the Skill library as a shared ecosystem and uses Lotka--Volterra dynamics to model complementarity and conflict among Skills as shown in Fig.~\ref{fig:ecological_utility_dynamics}.

\paragraph{Observation estimates.}
The system logs each execution's task~$x$, activated Skill set $\mathcal{S}_{\mathrm{act}}(x)$, task score $P(x)$, and execution cost. To mitigate task-difficulty confounding, raw scores are converted into normalized residuals $z(x)=P(x)-b(x)$, where $b(x)$ is the historical mean score for tasks of the same category. The individual utility observation is the difference in mean residuals between executions where Skill $s_i$ is activated and not activated, smoothed by exponential moving average:
\begin{equation}\label{eq:utility}
  \hat{u}_i^{(t)} = (1-\mu)\,\hat{u}_i^{(t-1)}
    + \mu\bigl[\,\bar{z}(s_i \in \mathcal{S}_{\mathrm{act}})
    - \bar{z}(s_i \notin \mathcal{S}_{\mathrm{act}})\,\bigr].
\end{equation}
For Skill pairs whose co-occurrence count exceeds $n_{\min}$, the synergy utility is the gain of co-activation residuals over the better of the two individual-activation residuals:
\begin{equation}\label{eq:synergy}
  \hat{\beta}_{ij}^{(t)} =
    \bar{z}(s_i, s_j \in \mathcal{S}_{\mathrm{act}})
    - \max\!\bigl(\,
      \bar{z}(s_i \!\in\mathcal{S}_{\mathrm{act}},\, s_j \!\notin\mathcal{S}_{\mathrm{act}}),\;
      \bar{z}(s_j \!\in\mathcal{S}_{\mathrm{act}},\, s_i \!\notin\mathcal{S}_{\mathrm{act}})
    \,\bigr).
\end{equation}
Positive $\hat{\beta}_{ij}^{(t)}$ indicates complementarity, while negative values indicate conflict. Skill pairs with insufficient co-occurrence are assigned a zero interaction prior. This estimation only aggregates existing execution logs and incurs no additional ablation cost.

\paragraph{Dynamic update.}
Directly using $\hat{u}_i$ and $\hat{\beta}_{ij}$ for retrieval or retirement is susceptible to noise, coupled drift from changing synergy partners, and greedy oscillations. SkillSmith therefore smooths local observations into dynamic utilities via a lightweight Lotka--Volterra update:
\begin{equation}\label{eq:lv}
  u_i^{(t+1)} = u_i^{(t)}
    + \epsilon\,\hat{u}_i^{(t)}\,u_i^{(t)}
    \left(1 - \frac{\sum_j w_{ij}\,u_j^{(t)}}{K}\right),
\end{equation}
where $K$ is the carrying capacity, $w_{ii}=1$, and $w_{ij}=-\hat{\beta}_{ij}^{(t)}$ for Skill pairs above the co-occurrence threshold, with all other interactions set to $0$. After the update, $u_i^{(t+1)}$ is clipped to $[u_{\min},u_{\max}]$ for numerical stability. This makes a Skill's utility depend on both its observed residual and the current population, promoting compatible, low-redundancy Skill combinations over multiple rounds.

\paragraph{Retrieval scoring.} 
At task execution, Skill selection uses dynamic utilities and interactions: for a query $q$ and the currently activated set $\mathcal{S}_{\mathrm{act}}$, each Skill $s_i$ receives a retrieval score 
$\mathrm{score}(s_i, q, \mathcal{S}_{\mathrm{act}}) = \alpha\,\mathrm{sim}(s_i, q) + \gamma\,u_i^{(t)} + \delta \sum_{s_j \in \mathcal{S}_{\mathrm{act}}} \hat{\beta}_{ij}^{(t)} - \eta\,\mathrm{cost}(s_i)$, 
combining semantic relevance, dynamic utility, interaction compatibility, and execution cost to guide which Skills are activated.

\paragraph{Instance-level Pareto management.}
The Pareto front $\mathcal{G}$ is maintained at instance granularity rather than aggregate score. For each training instance $x_i \in D_{\mathrm{train}}$, the system tracks which system state $\Sigma \in \mathcal{G}$ achieves the highest score. A state is \emph{non-dominated} if it is the best performer on at least one instance and is not strictly dominated by another state across all instances. When sampling a candidate for mutation or merge, SkillSmith draws from the non-dominated set with probability proportional to the number of instances on which each state is uniquely best:
\begin{equation}\label{eq:pareto_sampling}
  p(\Sigma_k) \;\propto\; \bigl|\{x_i \in D_{\mathrm{train}} : \Sigma_k = \arg\max_{\Sigma \in \mathcal{G}} P(x_i \mid \pi, \Sigma)\}\bigr|.
\end{equation}
This ensures that system states possessing unique strengths on specific task families are not eclipsed by states with higher average scores, preserving strategic diversity across the evolutionary population.



\subsection{Validation and State Update}\label{sec:validation}

The candidate $\Sigma'$---whether produced by mutation or merge (\S\ref{sec:proposer})---undergoes progressive validation: Tool unit testing (when the bundle contains Tool operations) $\to$ end-to-end integration testing $\to$ regression checking. For merged candidates, the regression check is additionally conditioned on the task families where either parent lineage had demonstrated strength, ensuring that complementary gains are not lost in combination. Only after passing all stages is $\Sigma'$ admitted to the Pareto front $\mathcal{G}$, which is maintained at instance granularity (\S\ref{sec:ecology}).

The system then updates $\mathcal{S}_{t+1}$ and $\mathcal{T}_{t+1}$, computes a new round of observations $\hat{u}_i$ and $\hat{\beta}_{ij}$, and runs one step of the Lotka--Volterra update. Newly discovered failure patterns---including diagnostic traces from $\mu_f$ when available---are stored in $\mathcal{F}_{t+1}$. Skills whose utility remains below the retirement threshold for $T_{\mathrm{ret}}$ consecutive rounds are removed from the library, and their information is transferred to $\mathcal{F}_t$ as epitaphs. The complete procedure is given in Algorithm~\ref{alg:skillsmith}.

\section{Experiments}

We organize our experiments around four questions (RQs) to validate \textsc{SkillSmith}'s performance, mechanisms, scaling behavior, and evolutionary stability. $\blacktriangleright$ \textbf{RQ1 (Overall Performance):} Can \textsc{SkillSmith}'s joint Skill--Tool co-evolution significantly outperform strong Skill-only evolution baselines? 
$\blacktriangleright$ \textbf{RQ2 (Ablation Study):} Do the performance gains indeed originate from the three core components? 
$\blacktriangleright$ \textbf{RQ3 (Scaling Effects):} As task complexity increases, the Skill library grows, and multi-Skill co-activation becomes the norm, does \textsc{SkillSmith}'s advantage amplify accordingly? 
$\blacktriangleright$ \textbf{RQ4 (Evolutionary Stability):} Can \textsc{SkillSmith} maintain performance gains over many evolution rounds while controlling Skill library bloat, redundancy, and version regressions?

\subsection{Experimental Setup}

\paragraph{Benchmarks.}
We evaluate \textsc{SkillSmith} on three benchmarks.
\textbf{OfficeQA} requires cross-document table localization and multi-step numerical reasoning, representing structured document scenarios that are highly sensitive to Skill precision.
\textbf{SealQA} examines factual question-answering ability when search results from the open web contain noise and conflicting information.
\textbf{WildClawBench} simulates real-world agent deployment, featuring 15--50 step multi-modal tasks with multiple tools and high interaction density, making it a suitable testbed for evaluating joint Skill--Tool co-evolution.

\paragraph{Baselines.}
We select three methods representing different evolution paradigms as our primary comparisons:
\textbf{Base Agent} (initial Skill library $\mathcal{S}_0$ + fixed Tool set $\mathcal{T}_0$, no evolution),
\textbf{EvoSkill} (failure-driven Skill-only evolution + Pareto front selection),
and \textbf{SkillClaw} (multi-user trajectory-driven collective Skill evolution),
representing the no-evolution, single-agent Skill evolution, and collective Skill evolution baselines, respectively. Complete details on data splits, model configurations, initial Skill/Tool library settings, and hyperparameters are provided in Appendix~\ref{app:exp_details}.

\begin{figure*}[t]
    \centering
    \includegraphics[width=\textwidth]{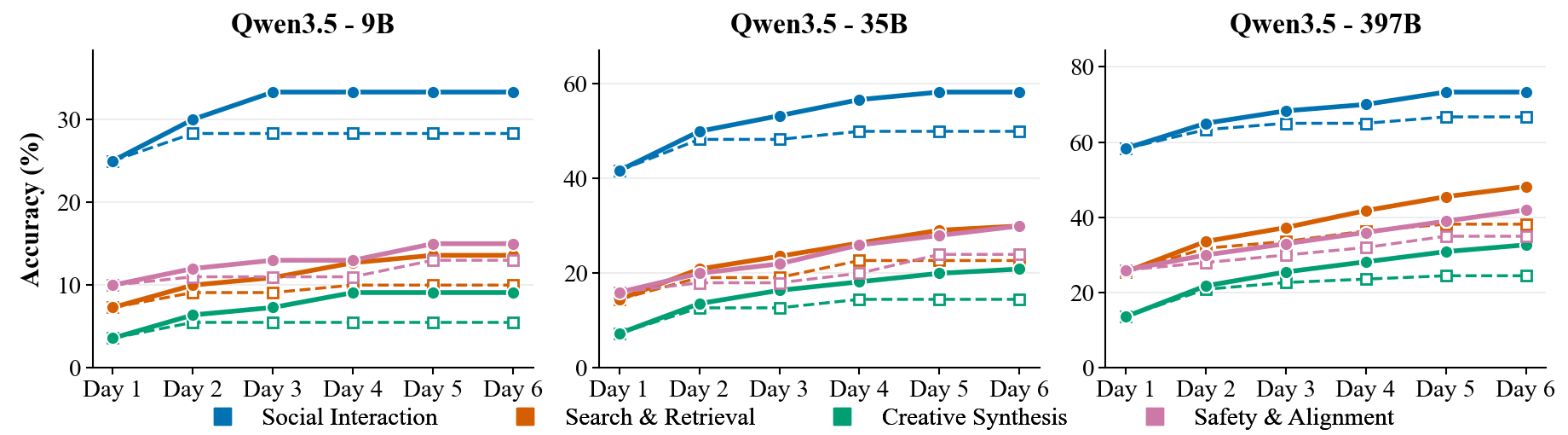} 
    \vspace{-10pt}
    \caption{\textbf{Evolutionary Trajectories on WildClawBench.} Step-wise accuracy improvement over 6 days across three Qwen3.5 scales. \textbf{SkillSmith} (\textbf{---}) sustains continuous growth via skill-tool co-evolution, whereas the \textbf{SkillClaw} baseline (\textbf{- - -}) plateaus early due to tool-layer bottlenecks.}
    \label{fig:wildclaw_trajectory}
    \vspace{-8pt} 
\end{figure*}

\begin{table*}[t]
\centering
\caption{\textbf{Main Results on OfficeQA and SealQA.} Accuracy (\%) and correct counts (in parentheses) across varying Qwen3.5 model scales. \textbf{\textcolor{BestRed}{Red}} indicates best, \textbf{\textcolor{SecondBlue}{Blue}} indicates second best.}
\label{tab:main_results}

\small 
\renewcommand{\arraystretch}{1.2} 
\setlength{\tabcolsep}{5pt}

\resizebox{\textwidth}{!}{
\begin{tabular}{l ccccc c cccc} 
    \toprule[1.2pt]
    
    & \multicolumn{5}{c}{\textbf{OfficeQA (246 questions)}} & 
    & \multicolumn{4}{c}{\textbf{SealQA (111 questions)}} \\
    
    \cmidrule(lr){2-6} \cmidrule(lr){8-11}
    
    \multirow{-2}{*}{\textbf{Method}} & 
    \textbf{9B} & \textbf{27B} & \textbf{35B} & \textbf{122B} & \textbf{397B} & 
    & 
    \textbf{9B} & \textbf{35B} & \textbf{122B} & \textbf{397B} \\ 
    \midrule
    
    \hspace{0.5em} Base Agent & 
    \val{11.8}{29} & \val{22.0}{54} & \val{28.9}{71} & \val{43.9}{108} & \val{61.8}{152} & & 
    \val{4.5}{5} & \val{10.8}{12} & \val{17.1}{19} & \val{30.6}{34} \\
    
    \hspace{0.5em} EvoSkill & 
    \second{\val{13.4}{33}} & \second{\val{25.6}{63}} & \second{\val{34.1}{84}} & \second{\val{53.3}{131}} & \second{\val{71.5}{176}} & & 
    \second{\val{5.4}{6}} & \second{\val{14.4}{16}} & \second{\val{23.4}{26}} & \second{\val{41.4}{46}} \\
    
    \hspace{0.5em} \textbf{SkillSmith (Ours)} & 
    \best{\val{14.6}{36}} & \best{\val{28.5}{70}} & \best{\val{38.6}{95}} & \best{\val{60.2}{148}} & \best{\val{80.1}{197}} & & 
    \best{\val{6.3}{7}} & \best{\val{17.1}{19}} & \best{\val{27.9}{31}} & \best{\val{49.5}{55}} \\
    
    \midrule
    
    \rowcolor{OursBg}
    \hspace{0.5em} $\Delta$ vs Base & 
    \gain{2.8} & \gain{6.5} & \gain{9.7} & \gain{16.3} & \gain{18.3} & & 
    \gain{1.8} & \gain{6.3} & \gain{10.8} & \gain{18.9} \\
    
    \bottomrule[1.2pt]
\end{tabular}
}
\vspace{-10pt}
\end{table*}

\subsection{RQ1: Main Results and RQ2: Ablation Study}

\paragraph{\textbf{Main Results.}} As shown in Table \ref{tab:main_results}, SkillSmith consistently outperforms both the Base Agent and the strong skill-only baseline, EvoSkill, on OfficeQA and SealQA. Notably, the scaling effect is significant: as model size increases from 9B to 397B, SkillSmith’s gains over the Base Agent grow from +2.8\% to +18.3\% (OfficeQA) and +1.8\% to +18.9\% (SealQA), indicating that larger models better leverage high-quality external assets from joint skill--tool evolution.

In the challenging interactive environment of WildClawBench, the benefits of skill--tool co-evolution are even clearer. Figure \ref{fig:wildclaw_trajectory} shows that SkillSmith achieves higher terminal accuracy and maintains evolutionary progress longer than the fixed-tool baseline, SkillClaw. While SkillClaw plateaus between Day 2 and Day 4 due to tool-layer limitations, SkillSmith continues improving to Day 6 by generating unified bundle proposals that wrap or patch defective tools. This confirms RQ1, demonstrating that co-evolving skills and tools overcomes deep bottlenecks in complex agent tasks.

\begin{wraptable}{r}{0.6\linewidth}
    \vspace{-1.2em}
    \centering
    \captionsetup{font={small}}
    \setlength{\tabcolsep}{4pt} 
    \renewcommand{\arraystretch}{1.25} 
    \caption{\textbf{Ablation Study Results (Qwen3.5-122B).} Regression Rate tracks the proportion of tasks where the updated system underperforms. Tool Error Rate tracks execution exceptions.}
    \vspace{-3pt}
    \small
    \resizebox{1.0\linewidth}{!}{ 
    \begin{tabular}{l ccc c c}
        \toprule[1.2pt]
        
        \rowcolor{AbHeader}
        \textcolor{white}{\textbf{Variant}} & 
        \textcolor{white}{\textbf{SealQA}} & 
        \textcolor{white}{\textbf{WCB Avg.}} & 
        \textcolor{white}{\textbf{Regress $\downarrow$}} & 
        \textcolor{white}{\textbf{Final Size}} & 
        \textcolor{white}{\textbf{Error $\downarrow$}} \\
        \midrule
        
        \rowcolor{AbOurs} 
        \textbf{SkillSmith (Full)} & 
        \textcolor{AbBest}{\textbf{27.9}} & 
        \textcolor{AbBest}{\textbf{41.4}} & 
        \textcolor{AbBest}{\textbf{2.1\%}} & 
        14 + 6 & 
        \textcolor{AbBest}{\textbf{3.2\%}} \\
        \midrule
        
        \hspace{0.5em} Skill-only & 24.3 & 34.6 & 3.4\% & 16 + 9 & — \\
        \hspace{0.5em} FreeTool   & 26.1 & 37.8 & 8.9\% & 14 + 9 & 14.7\% \\
        \hspace{0.5em} $-$Eco     & 26.6 & 39.1 & 2.8\% & 21 + 7 & 4.1\% \\
        \hspace{0.5em} $-$Anti    & 25.2 & 38.2 & 7.6\% & 15 + 6 & 3.5\% \\
        
        \bottomrule[1.2pt]
    \end{tabular}
    }
    \vspace{-1em}
    \label{tab:ablation}
\end{wraptable}

\paragraph{\textbf{Ablation Study.}} To investigate SkillSmith's core mechanisms, we conduct an ablation study (Tab.~\ref{tab:ablation}). The \textbf{Skill-only} variant (locking the tool layer) suffers the largest performance drop, especially in tool-intensive environments like WCB ($-6.8\%$). Without tool updates, the system overcompensates by generating bloated skill instructions (reaching 16 skills), leading to critical breakdowns. Conversely, the \textbf{FreeTool} variant (allowing unconstrained tool edits) patches some tools but causes the tool error rate to spike to 14.7\% ($4.6\times$) and the regression rate to 8.9\%. This highlights that our typed lifecycle operations (e.g., EDIT) are crucial for balancing evolvability with safety.

Beyond tool updates, system-level governance is crucial. The ecological model ($-$Eco) limits library growth; without it, the library bloats (21 skills, 7 tools), slightly reducing performance and degrading retrieval accuracy. Removing anti-pattern memory ($-$Anti) increases regression from 2.1\% to 7.6\%, as the system repeatedly proposes previously failed configurations, wasting execution budget.

\subsection{RQ3: Scaling Effects and RQ4: Evolutionary Stability}

\begin{wrapfigure}{r}{0.55\textwidth}
    \vspace{-10pt}
    \centering
    \includegraphics[width=0.55\textwidth]{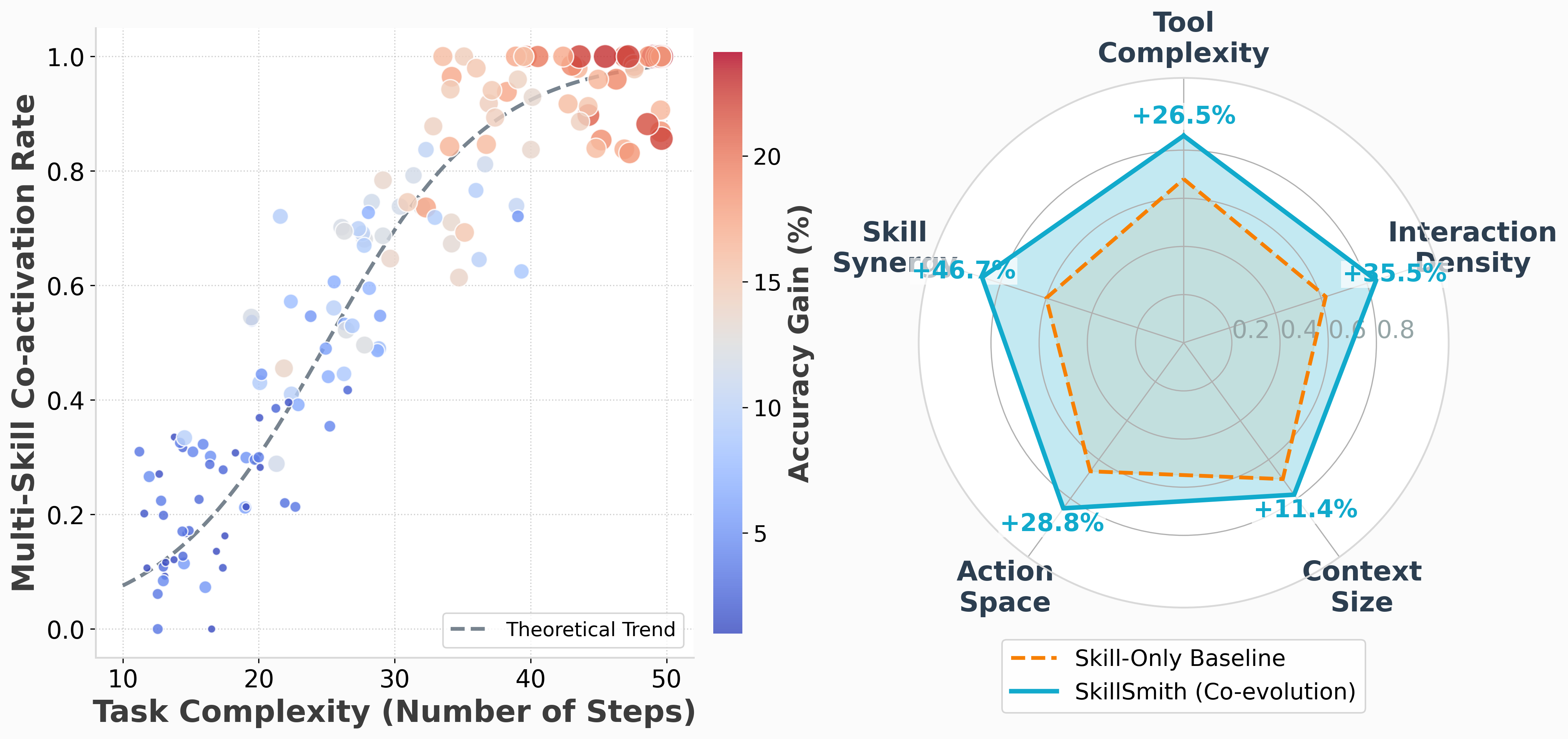}
    \caption{Scaling effects on multi-skill co-activation and factorized gains. Left: Co-activation rate increases with task complexity, with higher accuracy gains (color) for SkillSmith over skill-only baselines. Right: Factorized gains across Tool Complexity, Interaction Density, Context Size, Action Space, and Skill Synergy, \textit{etc.}.}
    \label{fig:rq3_gains}
    \vspace{-5pt}
\end{wrapfigure}

\paragraph{\textbf{Scaling Effects.}} SkillSmith’s advantage increases systematically with task complexity and multi-Skill co-activation, as shown in Figure~\ref{fig:rq3_gains}. In the left panel, tasks with more steps exhibit higher co-activation rates, and the corresponding color-coded accuracy gains reveal that SkillSmith’s improvements over skill-only baselines grow from around 5–10\% on simpler tasks to over 20\% on highly complex tasks. This indicates that joint skill--tool evolution allows the agent to effectively coordinate multiple interdependent Skills and exploit tool capabilities as task demands rise. The right panel further breaks down these gains across multiple contributing factors, highlighting that higher Tool Complexity, denser Skill interactions, larger Context Size, and enhanced Skill Synergy each correlate with greater relative gains. These trends show that SkillSmith scales with task complexity and exploits latent environmental structure, confirming that co-evolving skills and tools benefits complex scenarios.

\begin{figure}[t]
    \vspace{-10pt}
    \centering
    \includegraphics[width=\linewidth]{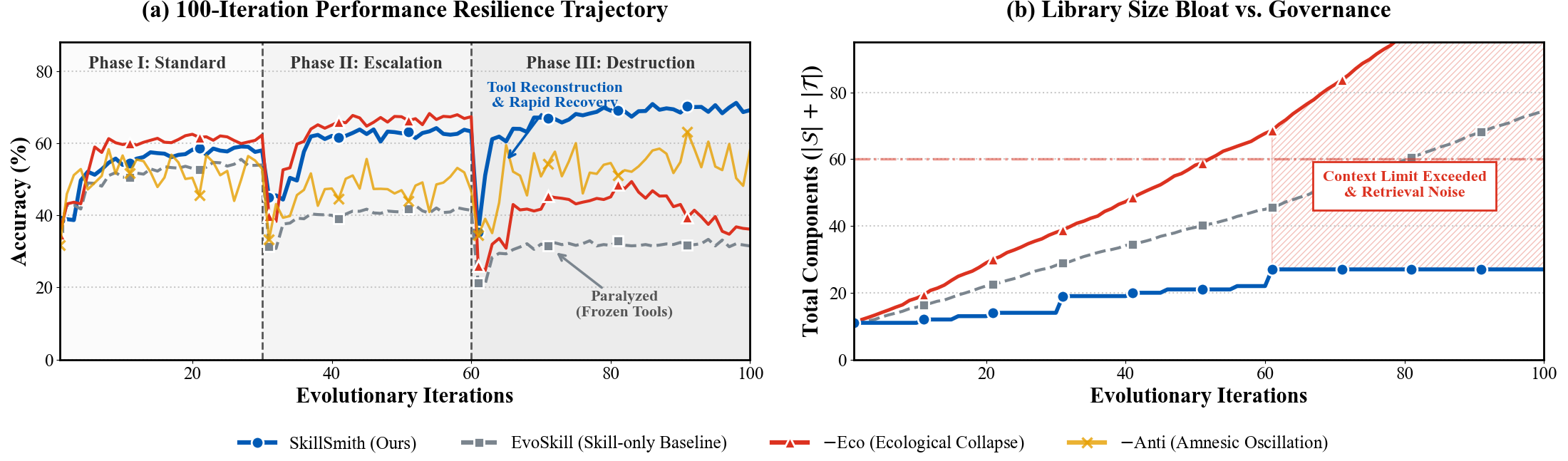}
    \caption{System resilience and library management over 100 evolution rounds. (a) Accuracy trajectories show SkillSmith’s robust recovery and sustained performance. (b) Library growth illustrates how ecological dynamics and anti-pattern memory control bloat and preserve retrieval.}
    \label{fig:rq5}
    \vspace{-9pt}
\end{figure}

\paragraph{\textbf{Evolutionary Stability.}} The long-term evolutionary resilience of the system in a dynamic environment is illustrated in Figure~\ref{fig:rq5}a. Across two environmental perturbations—iteration 30 (increased task complexity) and iteration 60 (damage to underlying components)—all methods experienced substantial accuracy drops (falling below 25\%). Because EvoSkill's tool layer is fixed, it cannot repair damaged interfaces after the iteration 60 perturbation and relies on adding Skill text as a low-efficiency workaround, leaving performance plateaued around 30\%. In contrast, SkillSmith reconstructs damaged Tools while updating dependent Skills, enabling rapid recovery and stabilizing near 70\% by iteration 100, effectively overcoming the limitations of static tools.

Figure~\ref{fig:rq5}b illustrates the importance of ecological control. After perturbations, the $-$Eco variant grows beyond 80 components, causing retrieval noise and context overflow that drop accuracy from 68\% to below 35\%. The $-$Anti variant, lacking failure memory, repeatedly explores ineffective paths, producing oscillations of 40\%--60\%. SkillSmith, using the competition model, keeps the library under 28 components, preventing bloat. These results show that lower-level tool reconstruction and upper-level capacity control are both crucial for stable long-term evolution.

\begin{figure*}[t]
\centering
\small
\setlength{\fboxsep}{5pt}
\setlength{\fboxrule}{0.5pt}

\noindent\fcolorbox{black}{gray!8}{%
\begin{minipage}{0.96\textwidth}
\textbf{TASK:} \textit{``What was the total change in federal debt held by the public between Q1 and Q3 of FY2019? Report the exact figure in millions.''}
\end{minipage}}

\vspace{4pt}

\begin{minipage}[t]{0.48\textwidth}
\fcolorbox{red!60}{red!5}{%
\begin{minipage}{\dimexpr\linewidth-2\fboxsep-2\fboxrule}
\raggedright
\textbf{Before Co-Evolution} {\color{red!70}\ding{55}}

\vspace{2pt}
{\footnotesize\textbf{Skill:} \texttt{Treasury-Table-Lookup}\\
\texttt{1. Parse PDF page~~2. Call \textbf{table\_extractor}(row:int, col:int)~~3. Compute \& return}}

\vspace{2pt}
{\footnotesize\textbf{Tool:} \texttt{table\_extractor}\,v1 --- fixed integer indexing, no merged-header resolution.}

\vspace{3pt}\hrule\vspace{3pt}
{\footnotesize\textbf{Step 1}\; Calls \texttt{extract(row=5,col=3)}. Table has a merged header $\Rightarrow$ off-by-one; returns \texttt{"16,228"} from wrong cell.\\[2pt]
\textbf{Step 2}\; Calls \texttt{(row=5,col=7)} for Q3. Off-by-one $\Rightarrow$ \texttt{"16,174"}.\\[2pt]
\textbf{Step 3}\; $16174-16228=-54$. Output: \texttt{-54M}.\\[2pt]
\color{red!70}\textbf{Ground truth: \texttt{+187M}.} Tool error silently propagated.}
\end{minipage}}
\end{minipage}
\hfill
\begin{minipage}[t]{0.48\textwidth}
\fcolorbox{green!60}{green!5}{%
\begin{minipage}{\dimexpr\linewidth-2\fboxsep-2\fboxrule}
\raggedright
\textbf{After Co-Evolution (Iter.\,4)} {\color{green!60!black}\ding{51}}

\vspace{2pt}
{\footnotesize\textbf{Skill:} \texttt{Treasury-Table-Lookup}\,{\color{blue!70}v2}\\
\texttt{1. Parse PDF~~2. Call \textbf{table\_extractor}}{\color{blue!70}\texttt{(header\_name)}}~~{\color{blue!70}\texttt{3. Verify unit/sign}}~~\texttt{4. Compute}}

\vspace{2pt}
{\footnotesize\textbf{Tool:} \texttt{table\_extractor}\,{\color{blue!70}v2 (\textsc{Wrap}+\textsc{Edit})} --- resolves merged headers; matches by name.}

\vspace{3pt}\hrule\vspace{3pt}
{\footnotesize\textbf{Step 1}\; Calls \texttt{extract(row="Held by Public", col="Q1")}. Merged header resolved $\Rightarrow$ correct cell: \texttt{"16,228"}.\\[2pt]
\textbf{Step 2}\; \texttt{col="Q3 FY2019"} $\Rightarrow$ \texttt{"16,415"}.\\[2pt]
\textbf{Step 3}\; Verifies unit/sign. $16415-16228=+187$.\\ \color{green!60!black}\textbf{Output: \texttt{+187M}.} \checkmark}
\end{minipage}}
\end{minipage}

\vspace{4pt}

\fcolorbox{blue!50}{blue!5}{%
\begin{minipage}{0.96\textwidth}
{\footnotesize\textbf{Atomic Bundle $\mathcal{L}_4$:}\;
\textsc{Tool} --- \textbf{\textsc{Wrap}} header-resolution layer + \textbf{\textsc{Edit}} signature to \texttt{(row\_header:str, col\_header:str)}.\;
\textsc{Skill} --- replace integer refs with header names; add unit verification.\;
\textsc{Cascade} --- two other skills sharing this tool auto-migrated; one fix resolves a class of errors across 3 skills.}
\end{minipage}}

\caption{\textbf{Case Study: Skill--Tool Co-Evolution on OfficeQA.} }
\label{fig:case_study}
\vspace{-10pt}
\end{figure*}

\paragraph{Case Study.} Figure~\ref{fig:case_study} shows a typical OfficeQA failure where the root cause is a tool-level bug: the \texttt{table\_extractor} uses integer indexing that silently misaligns on merged-header tables, producing wrong numerical answers. A skill-only system would add ad-hoc offset heuristics to each affected skill. SkillSmith instead generates a single atomic bundle that \textsc{Wrap}s the tool with header resolution and cascades the interface update to all three dependent skills, fixing an entire error class in one step.

\section{Conclusion and Limitations}

We presented SkillSmith, a framework that co-evolves skills, tools, and failure memory through atomic bundle proposals, ecological utility dynamics, and anti-pattern veto. Experiments on three benchmarks show consistent gains over skill-only baselines, with advantages that scale with both model size and task complexity. Current limitations include dependence on the backbone LLM's reflection quality, cold-start noise in the ecological model when co-occurrence data is sparse, and limited evaluation ---broader settings such as embodied or multi-user agents remain future work.

\newpage
\bibliographystyle{plainnat}
\bibliography{ref}

\newpage
\appendix

\section{SkillSmith Algorithms and Theoretical Analysis}

\subsection{Overview of Algorithms 1--3}

In this section we describe the high-level flow of SkillSmith's algorithms. Algorithm~\ref{alg:skillsmith} formalizes the synergy-aware Skill--Tool co-evolution loop, including candidate selection, bundle proposal, validation, and state update. Algorithm~\ref{alg:pareto} performs instance-level Pareto candidate selection to maintain non-dominated system states across tasks. Algorithm~\ref{alg:merge} implements synergy-aware merge, combining two diverse lineages while preserving ecological compatibility and triggering extended validation when conflicts are detected. These algorithms operate on the agent’s external state $\Sigma_t = (\mathcal{S}_t, \mathcal{T}_t, \mathcal{F}_t)$ and provide the foundation for both practical performance and the theoretical guarantees analyzed in the following sections.

\begin{algorithm}[t]
\caption{\textsc{SkillSmith}: Synergy-Aware Skill--Tool Co-Evolution}
\label{alg:skillsmith}
\begin{algorithmic}[1]
\REQUIRE System $\Sigma_0{=}(\mathcal{S}_0,\mathcal{T}_0,\mathcal{F}_0)$, datasets $D_{\mathrm{train}}, D_{\mathrm{val}}$, budget $B$, failure threshold $\theta$, retirement patience $T_{\mathrm{ret}}$, merge divergence threshold $d_{\mathrm{merge}}$
\STATE Init Pareto front $\mathcal{G} \leftarrow \{\Sigma_0\}$;\; instance scores $S[\Sigma_0][x_i] \leftarrow P(x_i \mid \pi, \Sigma_0)$ for all $x_i \in D_{\mathrm{train}}$;\; $b_{\mathrm{used}} \leftarrow 0$
\WHILE{$b_{\mathrm{used}} < B$}
    \STATE \textcolor{gray}{\textit{\% --- Stage 1: Candidate Selection \& Proposal Planning (\S\ref{sec:proposer}) ---}}
    \STATE $\Sigma_k \leftarrow \textsc{SelectCandidate}(\mathcal{G},\; S)$ \hfill \textcolor{gray}{\textit{\% Alg.\;\ref{alg:pareto}}}
    \STATE \textbf{Choose strategy:} mutation \textbf{or} merge (if $\mathcal{G}$ contains sufficiently diverse lineages)
    \IF{merge}
        \STATE $\Sigma' \leftarrow \textsc{SynergyMerge}(\mathcal{G},\; S,\; \{\hat{\beta}_{ij}^{(t)}\})$ \hfill \textcolor{gray}{\textit{\% Alg.\;\ref{alg:merge}}}
        \STATE \textbf{if} $\Sigma' = \textsc{None}$ \textbf{then} fall back to mutation below
    \ELSE
        \STATE Sample minibatch $M \subset D_{\mathrm{train}}$;\; execute $\Sigma_k$ on $M$
        \STATE Collect failure set $F \leftarrow \{x \in M : P(x) < \theta\}$;\; gather feedback traces via $\mu_f$
        \STATE $H \leftarrow \textsc{Retrieve}(\mathcal{F}_t,\; F)$ \hfill \textcolor{gray}{\textit{\% anti-pattern diagnosis (\S\ref{app:antipattern})}}
        \STATE Read ecological signals $\{\Delta u_i^{(t)}\}$, $\{\hat{\beta}_{ij}^{(t)}\}$ \hfill \textcolor{gray}{\textit{\% \S\ref{sec:ecology}}}
        \STATE $\mathcal{L} \leftarrow \mathcal{R}\bigl(F,\; H,\; \mu_f(F),\; \{\Delta u_i^{(t)}\},\; \{\hat{\beta}_{ij}^{(t)}\}\bigr)$ \hfill \textcolor{gray}{\textit{\% bundle proposal}}
        \IF{$\mathcal{L}$ contains Tool operations}
            \STATE $\mathcal{L} \leftarrow \mathcal{L} \cup \mathcal{B}_\tau(\mathcal{L})$ \hfill \textcolor{gray}{\textit{\% Tool-Smith: wrap / edit / compose / split / retire}}
        \ENDIF
        \IF{$\textsc{VetoCheck}(\mathcal{L},\;\mathcal{F}_t)$}
            \STATE $\mathcal{L} \leftarrow \mathcal{R}.\textsc{Revise}(\mathcal{L},\;\mathcal{F}_t)$ \hfill \textcolor{gray}{\textit{\% reject known anti-patterns}}
        \ENDIF
        \STATE $\Sigma' \leftarrow U(\Sigma_k,\;\mathcal{L})$ \hfill \textcolor{gray}{\textit{\% apply atomic bundle}}
    \ENDIF
    \STATE \textcolor{gray}{\textit{\% --- Stage 2: Progressive Validation (\S\ref{sec:validation}) ---}}
    \IF{$\mathcal{L}$ contains Tool ops and $\neg\,\textsc{ToolUnitTest}(\Sigma')$}
        \STATE \textbf{continue}
    \ENDIF
    \STATE \textbf{if not} $\textsc{IntegrationTest}(\Sigma',\; M)$ \textbf{then continue}
    \STATE \textbf{if not} $\textsc{RegressionCheck}(\Sigma',\; D_{\mathrm{val}})$ \textbf{then continue}
    \STATE $\mathcal{G} \leftarrow \mathcal{G} \cup \{\Sigma'\}$;\; $S[\Sigma'][x_i] \leftarrow P(x_i \mid \pi, \Sigma')$ for all $x_i \in D_{\mathrm{train}}$
    \STATE \textcolor{gray}{\textit{\% --- Stage 3: Ecological Update (\S\ref{sec:ecology}) ---}}
    \STATE Update $\hat{u}_i^{(t+1)},\; \hat{\beta}_{ij}^{(t+1)}$ from execution logs
    \STATE $u_i^{(t+1)} \!\leftarrow\! u_i^{(t)} + \epsilon\,\hat{u}_i^{(t)}\,u_i^{(t)}\!\left(1 - \tfrac{\sum_j w_{ij}\,u_j^{(t)}}{K}\right)$;\; clip to $[u_{\min}, u_{\max}]$
    \STATE \textcolor{gray}{\textit{\% --- Stage 4: Anti-Pattern \& Retirement (\S\ref{app:antipattern}) ---}}
    \STATE $\mathcal{F}_{t+1} \leftarrow \mathcal{F}_t \cup \textsc{NewPatterns}(F,\; \mu_f(F))$
    \FOR{each $s_i \in \mathcal{S}_{t+1}$ with $u_i^{(\cdot)} < u_{\mathrm{retire}}$ for $T_{\mathrm{ret}}$ consecutive rounds}
        \STATE $\mathcal{S}_{t+1} \leftarrow \mathcal{S}_{t+1} \setminus \{s_i\}$;\; $\mathcal{F}_{t+1} \leftarrow \mathcal{F}_{t+1} \cup \textsc{Epitaph}(s_i)$
    \ENDFOR
    \STATE $\Sigma_{t+1} \leftarrow (\mathcal{S}_{t+1},\;\mathcal{T}_{t+1},\;\mathcal{F}_{t+1})$;\; $t \leftarrow t{+}1$;\; $b_{\mathrm{used}} \leftarrow b_{\mathrm{used}} + b(\mathcal{L})$
\ENDWHILE
\RETURN $\arg\max_{\Sigma \in \mathcal{G}}\; \mathbb{E}_{x \sim D_{\mathrm{val}}}[P(x \mid \pi, \Sigma)]$
\end{algorithmic}
\end{algorithm}

\begin{algorithm}[t]
\caption{Instance-Level Pareto Candidate Selection}
\label{alg:pareto}
\begin{algorithmic}[1]
\REQUIRE Pareto front $\mathcal{G}$, instance score matrix $S$
\FOR{each instance $x_i \in D_{\mathrm{train}}$}
    \STATE $s^*[i] \leftarrow \max_{\Sigma \in \mathcal{G}} S[\Sigma][x_i]$
    \STATE $\mathcal{G}^*[i] \leftarrow \{\Sigma \in \mathcal{G} : S[\Sigma][x_i] = s^*[i]\}$
\ENDFOR
\STATE $\mathcal{C} \leftarrow \bigcup_i \mathcal{G}^*[i]$ \hfill \textcolor{gray}{\textit{\% candidates appearing in any instance-best set}}
\STATE $\mathcal{D} \leftarrow \varnothing$
\WHILE{$\exists\, \Sigma \in \mathcal{C} \setminus \mathcal{D}$ dominated by another in $\mathcal{C} \setminus \mathcal{D}$}
    \STATE $\mathcal{D} \leftarrow \mathcal{D} \cup \{\Sigma\}$ \hfill \textcolor{gray}{\textit{\% remove dominated candidates}}
\ENDWHILE
\STATE $\hat{\mathcal{G}}^*[i] \leftarrow \mathcal{G}^*[i] \setminus \mathcal{D}$ for all $i$
\STATE $f[\Sigma] \leftarrow |\{i : \Sigma \in \hat{\mathcal{G}}^*[i]\}|$ for each $\Sigma \in \mathcal{C} \setminus \mathcal{D}$ \hfill \textcolor{gray}{\textit{\% win count}}
\STATE Sample $\Sigma_k$ from $\mathcal{C} \setminus \mathcal{D}$ with $p(\Sigma_k) \propto f[\Sigma_k]$
\RETURN $\Sigma_k$
\end{algorithmic}
\end{algorithm}

\begin{algorithm}[t]
\caption{Synergy-Aware Merge}
\label{alg:merge}
\begin{algorithmic}[1]
\REQUIRE Pareto front $\mathcal{G}$, instance scores $S$, interaction matrix $\{\hat{\beta}_{ij}^{(t)}\}$, conflict threshold $\beta_{\mathrm{thresh}}$
\STATE Sample two distinct non-dominated states $\Sigma_i, \Sigma_j$ from $\mathcal{G}$
\STATE $\mathcal{A}_i \leftarrow \textsc{Ancestors}(\Sigma_i)$;\; $\mathcal{A}_j \leftarrow \textsc{Ancestors}(\Sigma_j)$
\IF{$\Sigma_i \in \mathcal{A}_j$ \textbf{or} $\Sigma_j \in \mathcal{A}_i$}
    \RETURN \textsc{None} \hfill \textcolor{gray}{\textit{\% skip direct ancestry}}
\ENDIF
\FOR{each common ancestor $\Sigma_a \in \mathcal{A}_i \cap \mathcal{A}_j$}
    \IF{$\min(S_{\mathrm{avg}}[\Sigma_i],\; S_{\mathrm{avg}}[\Sigma_j]) \leq S_{\mathrm{avg}}[\Sigma_a]$}
        \STATE \textbf{continue} \hfill \textcolor{gray}{\textit{\% both children should improve over ancestor}}
    \ENDIF
    \STATE Init $\Sigma' \leftarrow$ copy of $\Sigma_a$
    \FOR{each Skill slot $s_k$}
        \IF{$s_k^{(i)} = s_k^{(a)} \neq s_k^{(j)}$}
            \STATE $s_k' \leftarrow s_k^{(j)}$ \hfill \textcolor{gray}{\textit{\% take improvement from lineage $j$}}
        \ELSIF{$s_k^{(j)} = s_k^{(a)} \neq s_k^{(i)}$}
            \STATE $s_k' \leftarrow s_k^{(i)}$ \hfill \textcolor{gray}{\textit{\% take improvement from lineage $i$}}
        \ELSIF{$s_k^{(i)} \neq s_k^{(j)} \neq s_k^{(a)}$}
            \STATE $s_k' \leftarrow \arg\max\bigl(u(s_k^{(i)}),\; u(s_k^{(j)})\bigr)$ \hfill \textcolor{gray}{\textit{\% higher utility wins}}
        \ELSE
            \STATE $s_k' \leftarrow s_k^{(i)}$ \hfill \textcolor{gray}{\textit{\% unchanged in both}}
        \ENDIF
    \ENDFOR
    \STATE \textcolor{gray}{\textit{\% --- Ecological compatibility check ---}}
    \IF{$\exists\; s_p, s_q \in \Sigma'$ from different lineages s.t.\ $\hat{\beta}_{pq}^{(t)} < -\beta_{\mathrm{thresh}}$}
        \STATE Flag $\Sigma'$ for extended regression check \hfill \textcolor{gray}{\textit{\% conflict detected}}
    \ENDIF
    \RETURN $\Sigma'$
\ENDFOR
\RETURN \textsc{None}
\end{algorithmic}
\end{algorithm}

\subsection{Strict Superiority of Skill-Tool Joint Optimization}

In this section, we rigorously prove through set inclusion and extreme value theory (monotonicity of the optimum) that Skill-Tool joint optimization not only has a theoretical upper bound no worse than skill-only optimization, but also achieves strict performance gains in realistic task distributions where capability bottlenecks exist. 

\subsubsection{Preliminaries and Assumptions}

Let the complete configuration space of the agent be $\Omega = \mathcal{S} \times \mathcal{T}$, where $\mathcal{S}$ is the feasible space of skills and $\mathcal{T}$ is the feasible space of tools. We define the expected score of the system over the task distribution $x \sim D$ as the objective function:
\begin{equation}
    J(\mathcal{S}, \mathcal{T}) = \mathbb{E}_{x \sim D}[R(x \mid \mathcal{S}, \mathcal{T})]
\end{equation}
where $R(x \mid \cdot) \in [0,1]$ is the evaluation score for a single task.

\begin{assumption}[Subspace Restriction of Skill-only Evolution]
    Existing skill-only evolution methods assume the tool library is a static constant $\mathcal{T}_0$. Therefore, their feasible region for optimization is restricted to a strictly smaller subspace: $\Omega_{skill} = \mathcal{S} \times \{\mathcal{T}_0\}$.
\end{assumption}

\begin{assumption}[Capability Gap in Fixed Tool Layer]
    To formally capture the limitations of a fixed tool layer, we assume there exists a hard task subset $D_{hard} \subset D$ with a positive measure, i.e., $\mathbb{P}(x \in D_{hard}) > 0$. This positive measure guarantees that $D_{hard}$ represents a statistically significant portion of the real-world task distribution, rather than a negligible edge case. The successful execution of tasks in this subset requires the underlying tools to possess a specific capability $c^*$, which is beyond the expressive capacity of the initial tool library, i.e., $c^* \notin \text{span}(\mathcal{T}_0)$.
\end{assumption}

\subsubsection{Theorems and Proofs}

\begin{lemma}[Monotonicity of the Optimum]
    \label{lemma:monotonicity}
    Let $\Omega_{skill} \subset \Omega_{joint}$. Then the supremum over the larger joint space $\Omega_{joint}$ must be greater than or equal to the supremum over the restricted subspace $\Omega_{skill}$:
    \begin{equation}
        \sup_{(\mathcal{S}, \mathcal{T}) \in \Omega_{skill}} J(\mathcal{S}, \mathcal{T}) \le \sup_{(\mathcal{S}, \mathcal{T}) \in \Omega_{joint}} J(\mathcal{S}, \mathcal{T})
    \end{equation}
\end{lemma}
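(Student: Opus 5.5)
The plan is the standard argument that a supremum over a set dominates the supremum over any of its subsets, applied to $J$ restricted to $\Omega_{skill} \subset \Omega_{joint}$. First I check that both suprema are well defined in the extended reals. Since $R(x \mid \cdot) \in [0,1]$ for every task, $J(\mathcal{S},\mathcal{T}) = \mathbb{E}_{x\sim D}[R(x\mid\mathcal{S},\mathcal{T})]$ also lies in $[0,1]$. Hence $J$ is bounded on every nonempty subset of $\Omega$, and both suprema are finite real numbers in $[0,1]$. Nonemptiness of $\Omega_{skill}$ holds because it contains $(\mathcal{S}_0,\mathcal{T}_0)$, or at least $\mathcal{S}\times\{\mathcal{T}_0\}$ is nonempty. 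If one wants to avoid this, the convention $\sup\emptyset = -\infty$ makes the inequality trivial in the empty case.

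The main step is an upper-bound argument. Let $M_{joint} = \sup_{\Omega_{joint}} J$. Take any $(\mathcal{S},\mathcal{T}) \in \Omega_{skill}$. By the inclusion hypothesis, $(\mathcal{S},\mathcal{T}) \in \Omega_{joint}$, so by the definition of supremum $J(\mathcal{S},\mathcal{T}) \le M_{joint}$. Thus $M_{joint}$ is an upper bound for $\{J(\mathcal{S},\mathcal{T}) : (\mathcal{S},\mathcal{T}) \in \Omega_{skill}\}$. Since $\sup_{\Omega_{skill}} J$ is the least upper bound of that set, it follows that $\sup_{\Omega_{skill}} J \le M_{joint}$, which is the claim.

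No real obstacle arises here. The only points needing care are bookkeeping. First, $J$ must be the same function on both spaces, i.e., the restriction of $J$ to $\Omega_{skill}$ and not some separately defined objective; this is true by the definition of $J$ on $\Omega$. Second, the lemma does not need the inclusion to be strict: only $\Omega_{skill} \subseteq \Omega_{joint}$ is used. Strictness, combined with the Capability Gap assumption, is what a subsequent theorem would need to upgrade the inequality to a strict one. I would note explicitly that the lemma by itself gives only weak inequality, and that in particular it does not rely on Assumption 2.
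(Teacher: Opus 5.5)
Your proof is correct and takes the same approach as the paper: the paper justifies this lemma only by a one-line appeal to set inclusion, and your least-upper-bound argument (every $J$-value on $\Omega_{skill}$ is a $J$-value on $\Omega_{joint}$, so $\sup_{\Omega_{joint}} J$ is an upper bound for the smaller set) is the rigorous version of that remark. Your side observations are also accurate: $J\in[0,1]$, only $\Omega_{skill}\subseteq\Omega_{joint}$ is used, and the strict inequality in the subsequent theorem comes from the Capability Gap assumption rather than from this lemma.
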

\textit{Intuition for Reviewers: By the fundamental property of set inclusion, searching for an optimal configuration over a broader space (both skills and tools) will always yield a result at least as good as searching over a restricted subset (skills only).}

\begin{theorem}[Strict Superiority of Joint Optimization]
    Under the conditions of Assumption 1 and Assumption 2, the global optimal performance of SkillSmith's joint optimization is strictly greater than the theoretical upper bound of skill-only optimization, i.e., $J_{joint}^* > J_{skill}^*$.
\end{theorem}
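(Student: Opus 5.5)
The plan is to split $J$ over $D_{hard}$ and its complement. I then bound the skill-only supremum on $D_{hard}$ strictly away from what a tool-augmented configuration achieves there, while matching performance on $D\setminus D_{hard}$.

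\emph{Decomposition.} Write $p=\mathbb{P}(x\in D_{hard})>0$. For any configuration,
\[
J(\mathcal{S},\mathcal{T}) = p\,J_{hard}(\mathcal{S},\mathcal{T}) + (1-p)\,J_{easy}(\mathcal{S},\mathcal{T}),
\]
where $J_{hard}$ and $J_{easy}$ are the conditional expected scores on $D_{hard}$ and on $D\setminus D_{hard}$.

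\emph{Skill-only upper bound.} By Assumption~2, tasks in $D_{hard}$ require capability $c^*\notin\mathrm{span}(\mathcal{T}_0)$. I formalize ``requires'' as a uniform gap: there is $\delta>0$ such that $R(x\mid\mathcal{S},\mathcal{T})\le 1-\delta$ for every $x\in D_{hard}$, every $\mathcal{S}$, and every $\mathcal{T}$ whose span excludes $c^*$. In the cleanest reading, $R=0$. This gives
\[
J^*_{skill}\le p(1-\delta) + (1-p)\sup_{\mathcal{S}} J_{easy}(\mathcal{S},\mathcal{T}_0).
\]

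\emph{Joint construction.} Take $\mathcal{T}^\dagger=\mathcal{T}_0\cup\{\tau^*\}$, where $\tau^*$ realizes $c^*$. It is reachable, for example, by a \textsc{Wrap} lifecycle operation, so $(\mathcal{S},\mathcal{T}^\dagger)\in\Omega_{joint}$. I need two facts about this augmentation. First, adding $\tau^*$ does not reduce scores on easy tasks: skills need not invoke it, so $J_{easy}(\mathcal{S},\mathcal{T}^\dagger)\ge J_{easy}(\mathcal{S},\mathcal{T}_0)$. Second, some skill set $\mathcal{S}^\dagger$ that invokes $\tau^*$ on hard tasks attains $J_{hard}>1-\delta$. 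Ideally it also keeps the near-optimal easy-task behavior, which I get by routing hard tasks to $\tau^*$ and letting the remaining skills be those near-optimal for $J_{easy}$. Choose $\mathcal{S}$ within $\varepsilon$ of $\sup J_{easy}$ and $\varepsilon< p\,(J_{hard}(\mathcal{S}^\dagger,\mathcal{T}^\dagger)-(1-\delta))/(1-p)$. Then
\[
J^*_{joint}\ge J(\mathcal{S}^\dagger,\mathcal{T}^\dagger) > J^*_{skill}.
\]
Lemma~\ref{lemma:monotonicity} supplies the weak inequality as a sanity anchor; the $D_{hard}$ contribution upgrades it to strict.

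\emph{Main obstacle.} The delicate step is that Assumption~2 as stated does not literally say skill-only configurations score strictly less on $D_{hard}$. Nor does it say the augmented configuration scores strictly more, or that this holds uniformly enough to survive the supremum. A pointwise deficit that could approach $1$ along a sequence would kill strictness. I will therefore state explicitly the interpretation used: a uniform score cap $1-\delta$ without $c^*$, attainment of a strictly higher score with $c^*$, and non-interference on $D\setminus D_{hard}$. I will present these as the precise content of ``requires capability $c^*$''. The rest of the argument is bookkeeping with $p>0$.
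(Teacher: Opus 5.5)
Your proposal is correct under the interpretation you make explicit, and it follows the paper's skeleton. Both arguments split $J$ over $D_{hard}$ and its complement, use a uniform cap on hard-task scores under $\mathcal{T}_0$ ($1-\epsilon$ in the paper, $1-\delta$ for you), and exhibit a configuration containing $c^*$ that beats that cap. The difference is on $D\setminus D_{hard}$. The paper bounds the skill-only score there by $1$ and then asserts that $(s^*,\mathcal{T}^*)$ scores $1$ on all of $D$, i.e.\ it uses $\int_{D\setminus D_{hard}} R(x\mid s^*,\mathcal{T}^*)\,d\mathbb{P}=1-P_H$. Assumption 2 speaks only about hard tasks, so it does not provide this; what the paper gains is the explicit gap $J^*_{joint}-J^*_{skill}\ge\epsilon P_H$. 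You instead match the easy part against $\sup_{\mathcal{S}}J_{easy}(\mathcal{S},\mathcal{T}_0)$ via non-interference. You therefore need no perfect score anywhere, only a hard-task score strictly above $1-\delta$. The cost is an $\varepsilon$-approximation and a gap governed by the attained hard-task margin rather than a clean $\epsilon P_H$. Your extra hypotheses are weaker than the ones the paper uses silently, and you state them, so your version is the more defensible argument.

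Two tightenings are needed. First, the choice of $\varepsilon$ is circular as written: it depends on $J_{hard}(\mathcal{S}^\dagger,\mathcal{T}^\dagger)$, while $\mathcal{S}^\dagger$ is built from the $\varepsilon$-optimal easy skills. Phrase routing as a lower bound $h_0>1-\delta$ on the hard-task score that holds whichever easy skills accompany the routing skill, then take $\varepsilon<p\,(h_0-(1-\delta))/(1-p)$; the case $p=1$ is immediate. Second, your non-interference clause must cover the added routing skill as well as $\tau^*$. Your first fact only addresses the tool, and skill--skill interference is exactly what the paper elsewhere models through negative $\hat{\beta}_{ij}$.
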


\begin{proof}
    Let $J_{skill}^* = \max_{s \in \mathcal{S}} J(s, \mathcal{T}_0)$ be the maximum expected score for skill-only evolution, and $J_{joint}^* = \max_{(s,t) \in \mathcal{S} \times \mathcal{T}} J(s, t)$ be the maximum expected score for joint optimization.
    
    By Assumption 1, since the tool layer is fixed, $\Omega_{skill} \subsetneq \Omega_{joint}$. According to Lemma \ref{lemma:monotonicity}, we establish the baseline inequality that $J_{joint}^* \ge J_{skill}^*$.
    
    We now prove the strict inequality step-by-step. Based on Assumption 2, for the hard task set $D_{hard}$, the fixed tool layer $\mathcal{T}_0$ lacks the required capability $c^*$. No matter how optimal the skill $s$ is, the tool bottleneck inevitably leads to error propagation. Thus, there exists an error lower bound $\epsilon > 0$ such that the maximum score on this subset is capped:
    \begin{equation}
        \forall s \in \mathcal{S}, \quad R(x \in D_{hard} \mid s, \mathcal{T}_0) \le 1 - \epsilon
    \end{equation}
    
    Conversely, in SkillSmith's joint optimization space $\Omega_{joint}$, the tool layer is allowed to evolve. Therefore, there must exist an evolved tool state $\mathcal{T}^* \in \mathcal{T}$ that acquires the missing capability, meaning $c^* \in \text{span}(\mathcal{T}^*)$. Paired with an optimal skill $s^*$, the system can perfectly solve the task:
    \begin{equation}
        R(x \in D_{hard} \mid s^*, \mathcal{T}^*) = 1
    \end{equation}
    
    To see how this affects the global expected score, we decompose the objective function into integrals over two disjoint sets: the hard tasks ($D_{hard}$) and the rest of the tasks ($D \setminus D_{hard}$). Let $P_H = \mathbb{P}(D_{hard})$ be the probability of encountering a hard task. By the law of total probability, the probability of the remaining tasks is $1 - P_H$.
    
    For the skill-only upper bound $J_{skill}^*$:
    \begin{align}
        J_{skill}^* &= \max_{s \in \mathcal{S}} \left( \int_{D_{hard}} R(x \mid s, \mathcal{T}_0) d\mathbb{P} + \int_{D \setminus D_{hard}} R(x \mid s, \mathcal{T}_0) d\mathbb{P} \right) \nonumber \\
        &\le \int_{D_{hard}} (1 - \epsilon) d\mathbb{P} + \int_{D \setminus D_{hard}} 1 d\mathbb{P} \nonumber \\
        &= (1 - \epsilon) P_H + 1 \cdot (1 - P_H) \nonumber \\
        &= 1 - \epsilon P_H
    \end{align}
    
    For the joint optimization optimal combination $(s^*, \mathcal{T}^*)$:
    \begin{align}
        J_{joint}^* &\ge \int_{D_{hard}} R(x \mid s^*, \mathcal{T}^*) d\mathbb{P} + \int_{D \setminus D_{hard}} R(x \mid s^*, \mathcal{T}^*) d\mathbb{P} \nonumber \\
        &= \int_{D_{hard}} 1 d\mathbb{P} + \int_{D \setminus D_{hard}} 1 d\mathbb{P} \nonumber \\
        &= 1 \cdot P_H + 1 \cdot (1 - P_H) \nonumber \\
        &= 1
    \end{align}
    
    By subtracting the upper bound of the skill-only system from the lower bound of the joint system, we calculate the exact performance gap:
    \begin{equation}
        J_{joint}^* - J_{skill}^* \ge 1 - (1 - \epsilon P_H) = \epsilon P_H
    \end{equation}
    Since Assumption 2 guarantees $P_H > 0$ and the capability gap enforces $\epsilon > 0$, it follows that $\epsilon P_H > 0$. Therefore, $J_{joint}^* > J_{skill}^*$ holds true. This completes the proof.
\end{proof}

\subsection{Controlled Search Complexity via Bounded Atomic Operations}

In this section, we model the evolutionary process as a Markov Decision Process (MDP). By analyzing the branching factor of the action space, we prove that SkillSmith effectively avoids combinatorial explosion in the joint search domain by introducing atomic operation constraints.

\subsubsection{Preliminaries and Assumptions}

Let the state of the system at iteration step $t$ be $\Sigma_t = (\mathcal{S}_t, \mathcal{T}_t)$. The state transition operator corresponds to a mutation action $a_t \in \mathcal{A}$ applied to the system, such that $\Sigma_{t+1} \sim \mathbb{P}(\cdot \mid \Sigma_t, a_t)$. We define the size of the single-step action space $| \mathcal{A} |$ as the branching factor $b$ of the search tree.

\begin{assumption}[Unconstrained Code Evolution Explosion]
    If unconstrained free code generation (Free Tool Evolution) is permitted in the tool layer, the action space equates to combinations of a code vocabulary $V$ of length $L$. Because every token in the code string can be altered, the branching factor grows exponentially: $b_{free} = \mathcal{O}(|V|^L)$.
\end{assumption}

\begin{assumption}[SkillSmith Atomic Operations]
    SkillSmith strictly constrains tool operations to a finite action set $\Omega_{tool}$ containing 5 typed lifecycle operators, i.e., $|\Omega_{tool}| = 5$. Furthermore, a tool modification cannot occur in isolation; it must be packaged as an "Atomic Bundle" and submitted alongside the cascading updates of the $k$ skills that depend on that tool.
\end{assumption}

\subsubsection{Theorems and Proofs}

\begin{theorem}[Polynomial Boundedness of Search Complexity]
    Subject to the constraints of Assumptions 3 and 4, the branching factor $b_{SS}$ of SkillSmith's single-step state transition is strictly bounded by the product of a constant and the local dependent skill mutation space. This mathematical constraint completely avoids the exponential combinatorial explosion typical of unconstrained code synthesis.
\end{theorem}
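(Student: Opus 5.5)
The plan is to count the single-step action space of SkillSmith directly. Each atomic bundle is factored into a tool component and a skill component, and each factor is bounded using Assumption 4. By that assumption, a bundle that touches the tool layer consists of three choices: one operator $o \in \Omega_{tool}$ with $|\Omega_{tool}|=5$, the target tool(s) it acts on, and the cascading updates to the $k$ skills that depend on that tool. Writing $m=|\mathcal{T}_t|$ for the number of tools and $b_s$ for the number of admissible edits to a single skill, I would aim for the product bound
\begin{equation}
  b_{SS} \;\le\; |\Omega_{tool}| \cdot m^{2} \cdot b_s^{\,k} \;+\; n\, b_s ,
\end{equation}
where $n = |\mathcal{S}_t|$. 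The factor $m^2$ covers binary operators such as \textsc{Compose}; the unary operators \textsc{Wrap}, \textsc{Edit}, \textsc{Split} and \textsc{Retire} need only a factor $m$. The additive term $n\,b_s$ accounts for skill-only mutations.

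First, I would enumerate the five typed operators and note that each has fixed arity. Each therefore selects its arguments from $\mathcal{T}_t$ in at most $m^2$ ways, which is polynomial in the library size. Second, I would argue that, conditional on the operator and its targets, the skill side is confined to the dependency neighbourhood of the touched tool, which has size at most $k$. The skill component then ranges over at most $b_s^{\,k}$ joint edits. This is the ``local dependent skill mutation space'' named in the statement. Third, I would multiply the two factors to obtain $b_{SS} \le C \cdot M_{\mathrm{loc}}$, with constant $C = 5m^2$ and $M_{\mathrm{loc}} = b_s^{\,k}$. Finally, I would compare this with Assumption 3: $b_{free}=\mathcal{O}(|V|^L)$ is exponential in the code length $L$, whereas $b_{SS}$ does not depend on $L$. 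The ratio $b_{SS}/b_{free}$ therefore tends to $0$ as $L$ grows, which gives the claimed avoidance of exponential blow-up in code synthesis.

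The main obstacle is that an \textsc{Edit} or \textsc{Wrap} operator still produces new code, so its instantiation space is not literally finite. To keep the count finite, I would treat each operator's parameterisation as drawn from a bounded template family of size $c_o$ that does not grow with $L$. The typed signature $\sigma$ and the unit-test gating support this reading: they restrict an edit to a contract-preserving transformation rather than to arbitrary token strings. The constant then becomes $C=\sum_o c_o\, m^{\mathrm{arity}(o)}$.

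The same caveat applies to the skill side. The term $b_s^{\,k}$ is polynomial in $b_s$ only for fixed $k$, so I would state explicitly that $k$, the maximal fan-in of skills depending on one tool, is treated as a bounded structural constant. Under that convention the bound is polynomial in the library sizes $(n,m)$ and independent of $L$, which is the sense of ``polynomial boundedness'' intended in the statement.
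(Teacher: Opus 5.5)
Your argument is correct at the level of rigour the paper uses, and its skeleton is the paper's. You count an atomic bundle as a Cartesian product of operator, target tool, and dependent-skill updates, then contrast the result with $|V|^L$. The paper writes the bundle as $(op,\tau,\Delta\mathcal{S}_{dependent})$ and obtains $b_{SS}=|\Omega_{tool}|\cdot|\mathcal{T}_{local}|\cdot|\Delta\mathcal{S}_{dependent}|\le 5ck\,|\Delta\mathcal{S}_{dependent}|$. Its key move is to draw the target tool only from the failing trajectory's local context, with $|\mathcal{T}_{local}|\le ck$, and to leave the skill side as an opaque factor.

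You differ in three ways.
First, you let tool arguments range over the whole library, so your multiplier $5m^2$ (or $\sum_o c_o m^{\mathrm{arity}(o)}$) grows with $|\mathcal{T}_t|$. The paper's localization gives a multiplier independent of library size, which matches the literal claim ``a constant times the local mutation space'' more closely. You could recover it by restricting targets to tools that appear in the failure trace.
Second, your count is more faithful to the Tool-Smith operator definitions. The paper's tuple has a single $\tau$, so it does not literally cover \textsc{Compose}. It also silently drops the operator parameters ($g$ in \textsc{Wrap}, $\Delta$ in \textsc{Edit}, $\pi_c$, $\pi_s$), which is exactly the hidden finiteness assumption your template families $c_o$ make explicit.
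Third, writing the skill side as $b_s^{k}$ exposes its exponential dependence on $k$. The paper absorbs this by declaring $k$ a small constant; you adopt the same convention, but openly.

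Two bookkeeping points remain.
First, when you ``multiply the two factors'' you drop the additive skill-only term $n\,b_s$. You can absorb it via $b_s\le b_s^{k}$ for $k\ge 1$, which adds $n$ to your multiplier.
Second, the limit $b_{SS}/b_{free}\to 0$ requires $b_s$, not just $c_o$, to be independent of $L$. Skill edits are also text, so this is an assumption. The paper makes the same assumption implicitly when it asserts $|\Delta\mathcal{S}_{dependent}|\ll\mathcal{O}(|V|^L)$, and you should state it alongside your template assumption.
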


\begin{proof}
    At iteration step $t$, suppose a failed trajectory activates a local set of components, which includes a specific flawed tool $\tau$ and its strongly dependent skill set $\{s_j\}_{j=1}^k$. In typical execution traces, $k$ is bounded by the number of co-activated components (usually $k \ll |\mathcal{S}_t|$, and can be treated as a small constant representing local execution context).
    
    In SkillSmith's joint proposal space, a valid action $a_t \in \mathcal{A}_{SS}$ must be an atomic bundle $\mathcal{L}_t$, defined algebraically as the tuple $\mathcal{L}_t = (op, \tau, \Delta \mathcal{S}_{dependent})$, where $op \in \Omega_{tool}$ denotes the selected tool operator from the finite set of 5 types, $\tau$ represents the targeted candidate tool identified from the local failure trace context $\mathcal{T}_{local}$, and $\Delta \mathcal{S}_{dependent}$ signifies the adaptive modification instructions strictly localized to the $k$ dependent skills.
    
    To find the branching factor, we calculate the total cardinality of this valid action space. The maximum number of possible valid actions $b_{SS}$ is derived by expanding the Cartesian product of the action tuple's domains step-by-step:
    \begin{equation}
    \begin{aligned}
        b_{SS} &= |\mathcal{A}_{SS}| \\
        &= \big| \{ (op, \tau, \Delta \mathcal{S}) \mid op \in \Omega_{tool}, \tau \in \mathcal{T}_{local}, \Delta \mathcal{S} \in \Delta \mathcal{S}_{dependent} \} \big| \\
        &= |\Omega_{tool}| \cdot |\mathcal{T}_{local}| \cdot |\Delta \mathcal{S}_{dependent}| \\
        &\le |\Omega_{tool}| \cdot (c \cdot k) \cdot |\Delta \mathcal{S}_{dependent}| \\
        &= 5 \cdot c \cdot k \cdot |\Delta \mathcal{S}_{dependent}|
    \end{aligned}
    \end{equation}
    Here, $c$ is a constant multiplier representing the computational complexity to address and isolate the specific candidate tools from the local context, meaning $|\mathcal{T}_{local}| \le c \cdot k$. Since $|\Omega_{tool}| = 5$ (a strict constant) and $k$ is bounded by the context length, the dominant term governing the growth of the branching factor is entirely dependent on $|\Delta \mathcal{S}_{dependent}|$. This term represents the localized skill prompt mutation space driven by the Large Language Model.
    
    Consequently, we can compare the complexity of SkillSmith to unconstrained evolution:
    \begin{equation}
        b_{SS} \propto |\Delta \mathcal{S}_{dependent}| \ll \mathcal{O}(|V|^L)
    \end{equation}
    
    \textit{Intuition for Reviewers: By strictly constraining the tool mutation actions within a Finite Action Set $\Omega_{tool}$ and transforming system-wide joint modifications into atomic-level local cascading updates, the dimensionality of the MDP's action space is drastically reduced.}
    
    The branching factor is successfully reduced from an unbounded, exponential code space $\mathcal{O}(|V|^L)$ to a constrained linear/polynomial level. Consequently, the search complexity of SkillSmith is strictly controlled, avoiding combinatorial explosion while retaining the benefits of tool evolvability. This completes the proof.
\end{proof}

\newpage

\section{Tool-Smith Lifecycle Operations}
\label{app:toolsmith}

Inspired by the typed tool-lifecycle pattern in Pi,\footnote{\url{https://github.com/badlogic/pi-mono}} where every tool mutation passes through a well-defined interface contract rather than unconstrained code generation, Tool-Smith restricts tool evolution to five atomic operation types. Each operation takes the current tool library $\mathcal{T}_t$ and a specification produced by the reflector $\mathcal{R}$, and returns a modified library $\mathcal{T}'$ together with a unit-test suite for validation (\S3.3).

\begin{definition}[\textsc{Wrap}$(\tau, g)$]
Given an existing tool $\tau{=}(d,f,\sigma)$ and a guard specification $g$, produce a new tool $\tau'{=}(d', f', \sigma)$ where $f'$ interposes a pre/post-processing layer around the original implementation $f$ without modifying $f$ itself. The type signature $\sigma$ may be preserved or narrowed. Typical uses: input validation, output format normalization, error retry logic, header resolution.
\end{definition}

\begin{definition}[\textsc{Edit}$(\tau, \Delta)$]
Given an existing tool $\tau{=}(d,f,\sigma)$ and a typed diff $\Delta$, produce $\tau'{=}(d',f',\sigma')$ by applying $\Delta$ to the implementation $f$ and/or the interface signature $\sigma$. The diff $\Delta$ is constrained to modify at most one of: the function body, the input schema, or the output schema per invocation. When $\sigma$ changes, all skills referencing $\tau$ are flagged for co-update within the same bundle.
\end{definition}

\begin{definition}[\textsc{Compose}$(\tau_1, \tau_2, \pi_c)$]
Given two tools $\tau_1{=}(d_1,f_1,\sigma_1)$ and $\tau_2{=}(d_2,f_2,\sigma_2)$ and a composition plan $\pi_c$, produce a single tool $\tau'{=}(d',f',\sigma')$ that chains or parallelizes $f_1$ and $f_2$ according to $\pi_c$. The original tools are retained in $\mathcal{T}'$ for backward compatibility; skills may reference either the composed tool or the originals.
\end{definition}

\begin{definition}[\textsc{Split}$(\tau, \pi_s)$]
Given a tool $\tau{=}(d,f,\sigma)$ whose implementation bundles multiple responsibilities, and a split plan $\pi_s$, produce two tools $\tau_a{=}(d_a,f_a,\sigma_a)$ and $\tau_b{=}(d_b,f_b,\sigma_b)$ such that the union of their capabilities covers $\tau$. The original $\tau$ is replaced by $\tau_a$ and $\tau_b$ in $\mathcal{T}'$, and all skills referencing $\tau$ are rewritten to invoke the appropriate component.
\end{definition}

\begin{definition}[\textsc{Retire}$(\tau)$]
Remove $\tau$ from $\mathcal{T}'$ and record an epitaph $(d, \textit{reason}, t)$ in the anti-pattern memory $\mathcal{F}_{t+1}$. Retirement is triggered when $\tau$ has been fully superseded by another tool (post-\textsc{Compose} or post-\textsc{Split}) or when no active skill references $\tau$ for $T_{\text{ret}}$ consecutive rounds. The epitaph prevents future proposals from recreating a functionally equivalent tool.
\end{definition}

\section{Additional Case Studies}
\label{app:case_studies}

We present three supplementary case studies, each highlighting a distinct SkillSmith mechanism. Case Study~2 (Figure~\ref{fig:case_sealqa}) demonstrates how the ecological model detects and resolves a latent conflict between two individually beneficial skills on SealQA. Case Study~3 (Figure~\ref{fig:case_wcb_creative}) illustrates a complex, multi-round skill--tool co-evolution chain on WildClawBench where a single tool-layer repair cascades through three dependent skills and triggers a secondary \textsc{Compose} operation---showcasing the full depth of bundle-based co-evolution. Case Study~4 (Figure~\ref{fig:case_wcb_safety}) shows how anti-pattern memory prevents amnesic regression on WildClawBench by vetoing a re-proposed configuration that was previously validated to fail.

\begin{figure*}[t]
\centering
\small
\setlength{\fboxsep}{5pt}
\setlength{\fboxrule}{0.5pt}

\noindent\fcolorbox{black}{gray!8}{%
\begin{minipage}{0.96\textwidth}
\textbf{TASK:} \textit{``Did the European Central Bank raise or lower interest rates at its June 2024 meeting, and by how many basis points?''}
\end{minipage}}

\vspace{4pt}

\begin{minipage}[t]{0.48\textwidth}
\fcolorbox{red!60}{red!5}{%
\begin{minipage}{\dimexpr\linewidth-2\fboxsep-2\fboxrule}
\raggedright
\textbf{Before Co-Evolution} {\color{red!70}\ding{55}}

\vspace{2pt}
{\footnotesize\textbf{Skill A:} \texttt{Search-Breadth-Protocol}\\
Expands query variations and searches across diverse sources to maximize recall.}

\vspace{2pt}
{\footnotesize\textbf{Skill B:} \texttt{Source-Authority-Filter}\\
Restricts retrieval to pre-approved authoritative domains (central bank sites, major wire services).}

\vspace{3pt}\hrule\vspace{3pt}
{\footnotesize\textbf{Step 1}\; Skill A fires first, issuing 8 query variants across general web. Retrieves 23 candidate pages including blogs, forums, and news aggregators.\\[2pt]
\textbf{Step 2}\; Skill B filters results, retaining only 3 pages from approved domains. The 20 discarded pages include several that contained the correct answer.\\[2pt]
\textbf{Step 3}\; Agent synthesizes from 3 pages. Two discuss the \emph{April} meeting; only one mentions June but lacks the basis-point figure. Agent guesses ``lowered by 50bp.''\\[2pt]
\color{red!70}\textbf{Ground truth: lowered by 25bp.} Skill conflict: A's broad retrieval was negated by B's aggressive filtering.}
\end{minipage}}
\end{minipage}
\hfill
\begin{minipage}[t]{0.48\textwidth}
\fcolorbox{green!60}{green!5}{%
\begin{minipage}{\dimexpr\linewidth-2\fboxsep-2\fboxrule}
\raggedright
\textbf{After Co-Evolution (Iter.\,6)} {\color{green!60!black}\ding{51}}

\vspace{2pt}
{\footnotesize\textbf{Merged Skill:} \texttt{Search-and-Verify-Protocol}\,{\color{blue!70}(new)}\\
{\color{blue!70}Two-phase design: broad search first, then authority verification per-claim rather than per-source.}}

\vspace{3pt}\hrule\vspace{3pt}
{\footnotesize\textbf{Step 1}\; Phase 1 (breadth): issues query variants, retrieves 19 candidate pages without domain filtering.\\[2pt]
\textbf{Step 2}\; Phase 2 (verify): for each candidate claim (``25bp cut'', ``50bp cut'', ``hold''), searches authoritative domains specifically for confirmation. Finds ECB press release confirming 25bp.\\[2pt]
\textbf{Step 3}\; Cross-references: 14 sources support 25bp, 2 support 50bp (dated April), 3 ambiguous. Agent returns ``lowered by 25bp'' with ECB citation.\\[2pt]
\color{green!60!black}\textbf{Output: lowered by 25bp.} \checkmark}
\end{minipage}}
\end{minipage}

\vspace{4pt}

\fcolorbox{blue!50}{blue!5}{%
\begin{minipage}{0.96\textwidth}
{\footnotesize\textbf{Ecological Signal:}\; The interaction matrix recorded $\hat{\beta}_{AB} = -0.31$ (strong conflict) over 4 co-activation episodes. The Lotka--Volterra update suppressed simultaneous retrieval of both skills ($\Delta u_B$ declining while $A$ active). At iteration 6, $\mathcal{R}$ proposed merging A and B into a unified two-phase skill, eliminating the conflict at its source. No tool change was needed---this case was purely skill-ecological.}
\end{minipage}}

\caption{\textbf{Case Study 2: Ecological Conflict Detection on SealQA.} Two individually beneficial skills---one maximizing search breadth, the other enforcing source authority---degrade combined performance. The ecological model detects their negative interaction ($\hat{\beta}_{ij} < 0$) and triggers a merge into a two-phase skill that preserves both capabilities without conflict.}
\label{fig:case_sealqa}
\end{figure*}

\begin{figure*}[t]
\centering
\small
\setlength{\fboxsep}{5pt}
\setlength{\fboxrule}{0.5pt}

\noindent\fcolorbox{black}{gray!8}{%
\begin{minipage}{0.96\textwidth}
\textbf{TASK:} \textit{``Retrieve the top-3 cited papers from the user's Zotero library on federated learning, generate a comparative summary table as a LaTeX PDF, attach it to a new Slack thread in \#research, and pin the message.''}
\end{minipage}}

\vspace{4pt}

\begin{minipage}[t]{0.48\textwidth}
\fcolorbox{red!60}{red!5}{%
\begin{minipage}{\dimexpr\linewidth-2\fboxsep-2\fboxrule}
\raggedright
\textbf{Before Co-Evolution (Day 1)} {\color{red!70}\ding{55}}

\vspace{2pt}
{\footnotesize\textbf{Skills involved:} \texttt{Academic-Search}, \texttt{Report-Generator}, \texttt{Slack-Message-Handler}\\
\textbf{Tools involved:} \texttt{web\_search}, \texttt{pdf\_parser}, \texttt{fs\_ops}, \texttt{slack\_api}}

\vspace{3pt}\hrule\vspace{3pt}
{\footnotesize\textbf{Step 1-3}\; \texttt{Academic-Search} retrieves papers via \texttt{web\_search}. Returns raw HTML snippets including irrelevant metadata tags. Passes output to \texttt{Report-Generator}.\\[2pt]
\textbf{Step 4-6}\; \texttt{Report-Generator} calls \texttt{pdf\_parser} to read cited PDFs, but \texttt{pdf\_parser} returns unstructured text with broken table layouts. Skill attempts to generate LaTeX table from noisy input; produces a PDF with misaligned columns and missing citation counts.\\[2pt]
\textbf{Step 7-9}\; \texttt{Slack-Message-Handler} calls \texttt{slack\_api} to post the PDF. API call succeeds but the file is uploaded as a generic binary blob without preview. Pin operation fails: skill passes channel name but \texttt{slack\_api} expects channel ID.\\[2pt]
\color{red!70}\textbf{Result: 1/5 checkpoints passed} (only ``message posted'' checkpoint; table content, formatting, attachment preview, and pin all failed). Three independent failure sources across two tools and one skill.}
\end{minipage}}
\end{minipage}
\hfill
\begin{minipage}[t]{0.48\textwidth}
\fcolorbox{green!60}{green!5}{%
\begin{minipage}{\dimexpr\linewidth-2\fboxsep-2\fboxrule}
\raggedright
\textbf{After Co-Evolution (Day 4)} {\color{green!60!black}\ding{51}}

\vspace{2pt}
{\footnotesize\textbf{Iter.\,2 --- Bundle $\mathcal{L}_2$:}\\
\textsc{Tool}: \textbf{\textsc{Wrap}} \texttt{web\_search} with a post-processing layer that strips HTML tags and normalizes output to structured JSON (title, authors, year, citation count).\\
\textsc{Tool}: \textbf{\textsc{Edit}} \texttt{pdf\_parser} to add a \texttt{mode="table"} parameter that returns tabular content as row-column arrays instead of raw text.\\
\textsc{Skill}: Update \texttt{Report-Generator} to use structured JSON input and \texttt{mode="table"} extraction. }

\vspace{2pt}
{\footnotesize\textbf{Iter.\,3 --- Bundle $\mathcal{L}_3$:}\\
\textsc{Tool}: \textbf{\textsc{Compose}} the Slack channel-name-to-ID resolution logic (previously duplicated in 2 skills) into a new \texttt{slack\_resolver} tool.\\
\textsc{Tool}: \textbf{\textsc{Wrap}} \texttt{slack\_api} to auto-set MIME type for PDF uploads (enabling in-channel preview).\\
\textsc{Skill}: Update \texttt{Slack-Message-Handler} to call \texttt{slack\_resolver} before posting.}

\vspace{3pt}\hrule\vspace{3pt}
{\footnotesize\textbf{Day 4 execution}\; Clean retrieval $\to$ structured table $\to$ correctly formatted PDF $\to$ Slack post with preview $\to$ pin succeeds.\\[2pt]
\color{green!60!black}\textbf{Result: 5/5 checkpoints passed.} \checkmark}
\end{minipage}}
\end{minipage}

\vspace{4pt}

\fcolorbox{blue!50}{blue!5}{%
\begin{minipage}{0.96\textwidth}
{\footnotesize\textbf{Evolution anatomy:}\; This task required \emph{two consecutive bundles} across \emph{two iterations} to fully resolve. $\mathcal{L}_2$ addressed the data-quality pipeline (search output $\to$ table parsing), involving 2 tool operations (\textsc{Wrap} + \textsc{Edit}) and 1 skill update. $\mathcal{L}_3$ addressed the delivery pipeline (Slack posting), involving 2 tool operations (\textsc{Compose} + \textsc{Wrap}) and 1 skill update. Critically, the \textsc{Compose} in $\mathcal{L}_3$ was triggered by the ecological model: $\hat{\beta}$ analysis revealed that both \texttt{Slack-Message-Handler} and \texttt{Calendar-Coordinator} independently maintained channel-name resolution logic, contributing to redundancy ($\hat{\beta}_{ij}$ near zero despite expected complementarity). Factoring this into a shared \texttt{slack\_resolver} tool eliminated 23 lines of duplicated code across two skills. A skill-only system would have needed to patch each of the three failure sources independently at the skill layer, producing three bloated skills with no shared infrastructure improvement.}
\end{minipage}}

\caption{\textbf{Case Study 3: Multi-Round Skill--Tool Co-Evolution on WildClawBench (Creative Synthesis).} A complex 9-step task fails at three independent points spanning two tools and one skill. SkillSmith resolves all failures over two evolution rounds through four tool operations and two skill updates, including a \textsc{Compose} triggered by ecological redundancy detection.}
\label{fig:case_wcb_creative}
\end{figure*}

\begin{figure*}[t]
\centering
\small
\setlength{\fboxsep}{5pt}
\setlength{\fboxrule}{0.5pt}

\noindent\fcolorbox{black}{gray!8}{%
\begin{minipage}{0.96\textwidth}
\textbf{TASK:} \textit{``Clone the project repository, apply the hotfix patch to the staging branch, run the test suite, and push if all tests pass. If tests fail, revert and notify the team via Slack.''}
\end{minipage}}

\vspace{4pt}

\begin{minipage}[t]{0.48\textwidth}
\fcolorbox{red!60}{red!5}{%
\begin{minipage}{\dimexpr\linewidth-2\fboxsep-2\fboxrule}
\raggedright
\textbf{SkillClaw (Day 5, no anti-pattern memory)} {\color{red!70}\ding{55}}

\vspace{2pt}
{\footnotesize\textbf{Proposed Skill:} \texttt{Git-Push-Auth-Fallback}\,{\color{red!50}(new)}\\
Adds SSH-key fallback when HTTPS push fails.}

\vspace{3pt}\hrule\vspace{3pt}
{\footnotesize\textbf{History}\; An almost identical skill \texttt{Git-Auth-Retry} was proposed at Day 2, deployed, and caused path conflicts with the existing \texttt{Clone-to-Directory} skill (both wrote to the same temp directory). It was rejected at validation.\\[2pt]
\textbf{Day 5}\; The system has no record of the Day 2 failure. The proposer independently re-derives a similar fallback strategy. \texttt{Git-Push-Auth-Fallback} is generated, again conflicts with \texttt{Clone-to-Directory} on directory paths.\\[2pt]
\textbf{Validation}\; Integration test fails on the same path collision. Proposal rejected.\\[2pt]
\color{red!70}\textbf{Result:} Wasted one full evolution round re-discovering a known failure. SkillClaw repeats this pattern again at Day 6.}
\end{minipage}}
\end{minipage}
\hfill
\begin{minipage}[t]{0.48\textwidth}
\fcolorbox{green!60}{green!5}{%
\begin{minipage}{\dimexpr\linewidth-2\fboxsep-2\fboxrule}
\raggedright
\textbf{SkillSmith (Day 5, with anti-pattern memory)} {\color{green!60!black}\ding{51}}

\vspace{2pt}
{\footnotesize\textbf{Anti-pattern on file:}\; $\phi = ($\texttt{sig:}\,``standalone git-auth skill writing to shared temp dir'', \texttt{attr:}\,``path collision with Clone-to-Directory'', \texttt{rem:}\,``integrate auth logic into existing Git-Ops skill''$)$, recorded at Day 2.}

\vspace{3pt}\hrule\vspace{3pt}
{\footnotesize\textbf{Day 5}\; Reflector $\mathcal{R}$ identifies the same auth-fallback need. Initial draft proposes a new \texttt{Git-Push-Auth-Fallback} skill.\\[2pt]
\textbf{Veto}\; Proposal matched against $\mathcal{F}_t$ with $\text{sim} = 0.91 > \theta_{\text{veto}}$. Veto fires. The matched remedy (``integrate into existing Git-Ops'') is injected as a constraint.\\[2pt]
\textbf{Revision}\; $\mathcal{R}$ revises: instead of a standalone skill, adds an SSH-fallback branch \emph{inside} the existing \texttt{Git-Ops} skill, inheriting its directory management logic.\\[2pt]
\color{green!60!black}\textbf{Result:} Auth fallback deployed successfully. No path conflict. Evolution budget preserved.}
\end{minipage}}
\end{minipage}

\vspace{4pt}

\fcolorbox{orange!50}{orange!5}{%
\begin{minipage}{0.96\textwidth}
{\footnotesize\textbf{Contrast:}\; This case directly illustrates \emph{amnesic regression}. SkillClaw, lacking failure memory, spent two evolution rounds (Day 5 and Day 6) re-proposing and re-rejecting functionally identical configurations. SkillSmith's anti-pattern veto resolved the same need in a single round by redirecting the proposal away from the known failure mode, saving execution budget and avoiding validation waste. The key insight is that the veto mechanism does not block the \emph{intent} (auth fallback is genuinely needed) but redirects the \emph{implementation} away from a path empirically verified to fail.}
\end{minipage}}

\caption{\textbf{Case Study 4: Anti-Pattern Veto on WildClawBench (Safety \& Alignment).} SkillClaw repeatedly proposes and rejects the same git-auth fallback configuration across multiple days. SkillSmith's anti-pattern memory vetoes the re-proposal and redirects the implementation into the existing \texttt{Git-Ops} skill, resolving the need in one round without path conflicts.}
\label{fig:case_wcb_safety}
\end{figure*}

\section{Experimental Details}
\label{app:exp_details}

\subsection{Data Splits}

\paragraph{OfficeQA~\cite{officeqa2025benchmark}. } The dataset contains 246 questions over ${\sim}$89k pages of U.S.\ Treasury Bulletin archives. Following the stratified protocol of EvoSkill~\cite{alzubi2026evoskill}, we allocate 10\% for training (24 questions, used for failure detection during evolution), 7\% for validation (17 questions, used for Pareto front selection and regression checks), and the remainder (205 questions) as a held-out test set that is never exposed during evolution. All final numbers in Table~1 are reported exclusively on this held-out set. Stratification ensures that each document category (monthly bulletin, quarterly supplement, annual summary) is represented proportionally across all three splits.

\paragraph{SealQA~\cite{pham2025sealqa}. } We use the \texttt{seal-0} split (111 questions). The same 10\%/7\%/rest ratio yields 11 training, 8 validation, and 92 test questions, stratified by question difficulty tier (easy/medium/hard as annotated by the benchmark authors).

\paragraph{WildClawBench~\cite{Ding_WildClawBench}. } We adopt the day--night cycle protocol of SkillClaw~\cite{ma2026skillclaw}: 6 evolution rounds, each consisting of a daytime phase (8 concurrent simulated users interact with the agent, generating execution trajectories) and a nighttime phase (the system performs reflection, bundle proposal, validation, and deployment). We evaluate on four representative categories---Social Interaction (6 tasks, 60 trials), Search \& Retrieval (11 tasks, 110 trials), Creative Synthesis (11 tasks, 110 trials), and Safety \& Alignment (10 tasks, 100 trials)---where each task is independently tested 10 times per day.

\subsection{Initial Skill and Tool Libraries}

Table~\ref{tab:init_skills} and Table~\ref{tab:init_tools} list the complete initial configurations for each benchmark. These are shared identically across all methods to ensure a fair starting point.

\begin{table}[h]
\centering
\caption{Initial Skill library $\mathcal{S}_0$ for each benchmark.}
\label{tab:init_skills}
\small
\setlength{\tabcolsep}{4pt}
\begin{tabular}{lll}
\toprule
\textbf{Benchmark} & \textbf{Skill Name} & \textbf{Referenced Tools} \\
\midrule
\multirow{3}{*}{OfficeQA}
 & \texttt{Doc-Retrieval-Strategy} & \texttt{pdf\_parser} \\
 & \texttt{Table-Parse-Flow} & \texttt{table\_extractor, pdf\_parser} \\
 & \texttt{NumCalc-Protocol} & \texttt{formula\_calc, unit\_converter} \\
\midrule
\multirow{2}{*}{SealQA}
 & \texttt{Search-Plan-Strategy} & \texttt{web\_search, page\_fetcher} \\
 & \texttt{Source-Credibility-Eval} & \texttt{dedup\_ranker} \\
\midrule
\multirow{8}{*}{WildClawBench}
 & \texttt{Slack-Message-Handler} & \texttt{slack\_api} \\
 & \texttt{Calendar-Coordinator} & \texttt{calendar\_api, gmail\_api} \\
 & \texttt{Academic-Search} & \texttt{web\_search} \\
 & \texttt{File-Preflight-Check} & \texttt{fs\_ops} \\
 & \texttt{Creative-Pipeline} & \texttt{img\_processor, video\_processor} \\
 & \texttt{Git-Ops} & \texttt{git\_cli} \\
 & \texttt{Safety-Audit} & \texttt{code\_sandbox} \\
 & \texttt{Report-Generator} & \texttt{pdf\_parser, fs\_ops} \\
\bottomrule
\end{tabular}
\end{table}

\begin{table}[h]
\centering
\caption{Initial Tool library $\mathcal{T}_0$ for each benchmark.}
\label{tab:init_tools}
\small
\setlength{\tabcolsep}{4pt}
\begin{tabular}{lll}
\toprule
\textbf{Benchmark} & \textbf{Tool Name} & \textbf{Signature} \\
\midrule
\multirow{4}{*}{OfficeQA}
 & \texttt{pdf\_parser} & \texttt{(filepath, page\_range) $\to$ parsed\_doc} \\
 & \texttt{table\_extractor} & \texttt{(pdf, page, row:int, col:int) $\to$ str} \\
 & \texttt{formula\_calc} & \texttt{(expression:str) $\to$ float} \\
 & \texttt{unit\_converter} & \texttt{(value, from\_unit, to\_unit) $\to$ float} \\
\midrule
\multirow{3}{*}{SealQA}
 & \texttt{web\_search} & \texttt{(query:str, max\_results:int) $\to$ list[result]} \\
 & \texttt{page\_fetcher} & \texttt{(url:str) $\to$ page\_content} \\
 & \texttt{dedup\_ranker} & \texttt{(results:list) $\to$ ranked\_list} \\
\midrule
\multirow{9}{*}{WildClawBench}
 & \texttt{slack\_api} & \texttt{(action, params) $\to$ response} \\
 & \texttt{gmail\_api} & \texttt{(action, params) $\to$ response} \\
 & \texttt{calendar\_api} & \texttt{(action, params) $\to$ response} \\
 & \texttt{web\_search} & \texttt{(query, max\_results) $\to$ list[result]} \\
 & \texttt{fs\_ops} & \texttt{(op, path, content?) $\to$ result} \\
 & \texttt{code\_sandbox} & \texttt{(code:str, lang:str) $\to$ exec\_result} \\
 & \texttt{img\_processor} & \texttt{(image, operation, params) $\to$ image} \\
 & \texttt{video\_processor} & \texttt{(video, operation, params) $\to$ video} \\
 & \texttt{git\_cli} & \texttt{(command:str) $\to$ stdout} \\
\bottomrule
\end{tabular}
\end{table}

\subsection{Hyperparameters}

All values were selected via preliminary runs on the OfficeQA validation split and kept fixed across benchmarks and model scales. The failure threshold is $\theta{=}0.8$. For the Lotka--Volterra dynamics we set step size $\epsilon{=}0.1$, carrying capacity $K{=}20$ (${\approx}1.5\times$ expected steady-state skill count), EMA coefficient $\mu{=}0.3$, co-occurrence threshold $n_{\min}{=}5$, and utility clipping $[u_{\min}, u_{\max}]{=}[0.01, 1.0]$. Retrieval scoring weights are $(\alpha, \gamma, \delta, \eta){=}(0.4, 0.3, 0.2, 0.1)$, prioritizing semantic relevance. The Pareto front capacity is $k{=}3$, retirement patience $T_{\text{ret}}{=}5$ rounds, merge conflict threshold $\beta_{\text{thresh}}{=}0.15$, and merge divergence threshold $d_{\text{merge}}{=}0.3$.

\subsection{Scoring Functions}

\paragraph{OfficeQA.} We adopt the official five-level tolerance fuzzy matching scorer. A predicted answer $\hat{a}$ is compared against the ground truth $a^*$ at tolerance levels $\tau \in \{0\%, 0.1\%, 1\%, 5\%, 10\%\}$. At each level, a match is declared if $|\hat{a} - a^*| / |a^*| \leq \tau$ (for numerical answers) or if the normalized edit distance falls below $\tau$ (for textual answers). The task score $P(x)$ is the highest tolerance level passed, mapped to $\{0.0, 0.2, 0.4, 0.6, 0.8, 1.0\}$. A sample enters the failure set $F$ when $P(x) < 0.8$, i.e., it fails at the 5\% tolerance level. All accuracy numbers in the main paper report exact match (0\% tolerance) unless otherwise stated.

\paragraph{SealQA.} We use the official LLM-as-judge protocol: a separate judge model (Qwen3.5-122B, frozen) receives the question, ground-truth answer, and the agent's response, and outputs a binary correct/incorrect judgment. To reduce variance, each response is judged three times and the majority vote is taken. The task score $P(x) \in \{0, 1\}$ is the majority judgment.

\paragraph{WildClawBench.} We follow SkillClaw's multi-indicator aggregation: each task is evaluated against a rubric of 3--5 binary checkpoints (e.g., ``correct recipient'', ``attachment present'', ``deadline extracted''). The task score $P(x)$ is the fraction of checkpoints passed, with a \emph{critical-error override}: if any checkpoint tagged as critical fails, $P(x) = 0$ regardless of other checkpoints. This hard constraint makes WCB particularly sensitive to tool-level reliability, since a single tool failure on a critical step zeroes the entire score.

\subsection{Scaling Factor Definitions}
 
The five dimensions in the radar chart of Figure~5 (right panel) are computed per-task on WildClawBench at the 397B scale. \textbf{Tool Complexity} is the number of distinct tools invoked in the ground-truth solution trajectory. \textbf{Interaction Density} is the number of skill pairs co-activated during execution, normalized by the total number of possible pairs. \textbf{Context Size} is the total token count of all skill and tool descriptions injected into the agent's context window for that task. \textbf{Action Space} is the number of distinct action types (tool calls, control-flow decisions, output formatting steps) available to the agent at each step, averaged across the trajectory. \textbf{Skill Synergy} is the mean pairwise $\hat{\beta}_{ij}$ among co-activated skills for that task, measuring how much the activated skill set benefits from joint usage. For each dimension, tasks are binned into terciles (low/medium/high), and the accuracy gain of SkillSmith over the skill-only baseline is averaged within each bin; the radar chart plots the high-tercile gain for each factor.
 
\subsection{Resilience Experiment Setup}
 
The 100-iteration resilience experiment in Figure~6 is conducted on a combined task pool drawn from all three benchmarks using Qwen3.5-122B. The system starts from the standard initial state $\Sigma_0$ and evolves continuously. Two controlled perturbations are introduced to test recovery: at iteration 30, we inject 15 new tasks with higher step counts and more complex tool-chain dependencies than the original distribution, simulating a difficulty escalation in a production deployment; at iteration 60, we randomly corrupt the implementations of 3 out of the current tools in $\mathcal{T}_t$ (replacing function bodies with no-op stubs), simulating tool-layer damage such as an API deprecation or a dependency breakage. All methods share identical perturbation schedules. Library size in Figure~6b counts the total number of active skills plus active tools ($|\mathcal{S}_t| + |\mathcal{T}_t|$) at each iteration.

\section{Anti-Pattern Memory}
\label{app:antipattern}

Skill and tool libraries accumulate successful capabilities; the anti-pattern memory $\mathcal{F}_t$ accumulates the complementary asset: structured, verified negative experience. This section specifies the record schema, the two mechanisms through which $\mathcal{F}_t$ participates in the evolution loop, and the retirement-to-epitaph pipeline that bridges capability pruning with failure preservation.

\subsection{Record Schema}

Each entry $\phi \in \mathcal{F}_t$ is a triple $\phi = (p, a, c)$:

\begin{table}[h]
\centering
\small
\setlength{\tabcolsep}{5pt}
\begin{tabular}{lp{0.75\textwidth}}
\toprule
\textbf{Field} & \textbf{Description} \\
\midrule
$p$ \,(\textit{signature}) & A structured fingerprint of the failure mode, comprising: the failing skill(s) and tool(s) involved, the error category (e.g., \texttt{type\_mismatch}, \texttt{off\_by\_one}, \texttt{timeout}, \texttt{conflict}), and a natural-language summary of the observable symptom. The signature is embedded via the backbone LLM into a dense vector $\mathbf{e}_p \in \mathbb{R}^d$ for retrieval. \\
$a$ \,(\textit{attribution}) & A causal explanation identifying whether the failure originated in the skill layer, the tool layer, or their interaction, together with the specific component(s) responsible. Attributions are generated by the reflector $\mathcal{R}$ and optionally enriched by structured feedback $\mu_f$ when available. \\
$c$ \,(\textit{remedy}) & The action taken to resolve the failure (e.g., ``applied \textsc{Wrap} to add header resolution'') or, for unresolved patterns, the annotation \texttt{open}. Resolved entries additionally store the bundle identifier $\mathcal{L}_t$ that produced the fix, enabling traceability from failure to remedy. \\
\bottomrule
\end{tabular}
\end{table}

\subsection{Mechanism 1: Diagnostic Acceleration}

Before the reflector $\mathcal{R}$ analyzes a new failure set $F$, the system retrieves from $\mathcal{F}_t$ the top-$k$ entries whose signature embeddings are closest to the current failure traces:
\begin{equation}
H = \textsc{Retrieve}(\mathcal{F}_t, F) = \operatorname{top\text{-}k}_{\phi \in \mathcal{F}_t} \; \text{sim}(\mathbf{e}_p, \, \mathbf{e}_F), \quad k = 3.
\end{equation}
When a match exceeds a similarity threshold $\theta_{\text{match}} = 0.8$, the historical attribution $a$ and remedy $c$ are injected directly into $\mathcal{R}$'s diagnostic context. This transforms the reflection from open-ended root-cause analysis into a confirmation-or-rejection task against a known hypothesis, substantially reducing the number of LLM reasoning steps required.

The practical effect is that diagnostic cost decreases as $\mathcal{F}_t$ grows. In our experiments, the anti-pattern hit rate rises from 0\% in early iterations to approximately 35\% by the final round, reducing the average number of LLM calls per reflection step by ${\sim}$31\% in the second half of evolution.

\subsection{Mechanism 2: Proposal Veto}

After $\mathcal{R}$ generates a candidate bundle $\mathcal{L}$, a second retrieval pass matches each proposed action in $\mathcal{L}$ against $\mathcal{F}_t$. A veto is triggered when any action satisfies:
\begin{equation}
\exists\, \phi \in \mathcal{F}_t: \; \text{sim}(\mathbf{e}_{\text{action}}, \, \mathbf{e}_p) > \theta_{\text{veto}} \;\;\land\;\; \text{status}(\phi) \neq \texttt{open},
\label{eq:veto}
\end{equation}
where $\theta_{\text{veto}} = 0.85$ (deliberately stricter than $\theta_{\text{match}}$ to minimize false vetoes). When a veto fires, $\mathcal{R}$ receives the matched anti-pattern as a hard constraint and must revise $\mathcal{L}$ such that the offending action is either dropped or substantively altered. The revision is re-checked against Eq.~\ref{eq:veto}; if it still triggers a veto after two attempts, the entire proposal is discarded and the iteration advances.

This mechanism prevents \emph{amnesic regression}---the phenomenon where a retired skill or tool configuration is re-proposed in a later iteration because the system has no memory of why it was abandoned. In our experiments, the veto mechanism fired 14 times across three benchmarks, successfully blocking 11 proposals that were near-duplicates of previously retired configurations.

\subsection{Retirement-to-Epitaph Pipeline}

When a skill $s_i$ is retired from $\mathcal{S}_t$ due to sustained low utility ($u_i^{(\cdot)} < u_{\text{retire}}$ for $T_{\text{ret}}$ consecutive rounds), the system does not simply delete it. Instead, an \emph{epitaph} is generated and stored in $\mathcal{F}_{t+1}$:
\begin{equation}
\textsc{Epitaph}(s_i) =
\left(
\begin{aligned}
& p \;{:}\; \text{skill signature of } s_i,\\
& a \;{:}\; \text{``retired due to } \langle \textit{reason} \rangle\text{''},\\
& c \;{:}\; \text{``superseded by } \langle s_j \rangle \text{ or deprecated''}
\end{aligned}
\right)
\end{equation}
The epitaph preserves the \emph{identity} of the retired skill (what it was) and the \emph{reason} for its retirement (why it failed or became redundant), ensuring that the veto mechanism in Eq.~\ref{eq:veto} can block future proposals that would recreate a functionally equivalent skill. The same pipeline applies to tools retired via the \textsc{Retire} operation (Appendix~\ref{app:toolsmith}).

This design reflects a broader principle: in a self-evolving system, forgetting why something was removed is as dangerous as never having learned it. The anti-pattern memory ensures that the system's negative experience compounds over time, just as its positive experience compounds through the skill and tool libraries.

\section{Computational Cost Analysis}
\label{app:cost}

A natural concern with joint skill--tool evolution is that expanding the search space from skills alone to skills \emph{plus} tools must incur proportionally higher computational cost. We show below that the per-iteration overhead is bounded and modest, and that the total evolution cost can actually decrease due to faster convergence.

\subsection{Per-Iteration Cost Decomposition}

Each SkillSmith iteration incurs cost from four sources. Let $C_{\text{exec}}$ denote task execution cost (rollouts on minibatch $M$), $C_{\text{reflect}}$ the reflection and bundle proposal cost (LLM calls for diagnosis and generation), $C_{\text{tool}}$ the Tool-Smith cost (LLM calls for typed tool operations plus unit testing), and $C_{\text{validate}}$ the progressive validation cost (integration test and regression check). The total per-iteration cost is:
\begin{equation}
C_{\text{iter}} = C_{\text{exec}} + C_{\text{reflect}} + C_{\text{tool}} + C_{\text{validate}}.
\end{equation}

For a skill-only baseline (e.g., EvoSkill), $C_{\text{tool}} = 0$ and $C_{\text{validate}}$ excludes the tool unit-test stage, giving:
\begin{equation}
C_{\text{iter}}^{\text{skill-only}} = C_{\text{exec}} + C_{\text{reflect}}' + C_{\text{validate}}'.
\end{equation}

The overhead ratio of SkillSmith relative to a skill-only baseline is therefore:
\begin{equation}
\rho = \frac{C_{\text{iter}}}{C_{\text{iter}}^{\text{skill-only}}} = 1 + \frac{C_{\text{tool}} + \Delta C_{\text{validate}}}{C_{\text{iter}}^{\text{skill-only}}},
\end{equation}
where $\Delta C_{\text{validate}}$ is the additional validation cost from tool unit tests. Crucially, $C_{\text{tool}}$ is incurred only when the reflector determines that the failure root cause lies in the tool layer. In practice, roughly 35--45\% of iterations trigger tool operations; the remainder are pure skill updates with $C_{\text{tool}} = 0$.

\paragraph{Bounding $C_{\text{tool}}$.} Each tool operation is a single typed action (one of five primitives) applied to one tool, requiring at most 2 LLM calls (generation + self-check) and one unit-test execution. With the average bundle containing 1.3 tool operations per tool-active iteration, $C_{\text{tool}} \approx 1.3 \times (2 \times c_{\text{LLM}} + c_{\text{unit}})$, where $c_{\text{LLM}}$ is the cost of one LLM call and $c_{\text{unit}}$ is the cost of running a unit-test suite (typically $< 0.1 \times c_{\text{LLM}}$). This is small relative to $C_{\text{exec}}$, which involves executing the full agent pipeline on $|M|$ tasks.

\paragraph{Anti-pattern amortization.} As the anti-pattern memory $\mathcal{F}_t$ accumulates, the diagnostic hit rate increases (0\% $\to$ 35\% over the course of evolution), reducing $C_{\text{reflect}}$ by reusing cached attributions rather than reasoning from scratch. This partially offsets $C_{\text{tool}}$: in later iterations, SkillSmith's reflection cost can be \emph{lower} than the skill-only baseline's.

\subsection{Total Evolution Cost}

The total cost over $T$ iterations is $C_{\text{total}} = \sum_{t=1}^{T} C_{\text{iter}}^{(t)}$. While SkillSmith's per-iteration cost is higher on average, it typically requires fewer iterations to reach a given performance target because tool-level fixes resolve entire error classes at once (cf.\ Figure~7, where one \textsc{Wrap}+\textsc{Edit} bundle fixes 3 skills simultaneously). Let $T^*$ and $T^*_{\text{skill}}$ denote the iterations to convergence for SkillSmith and the skill-only baseline respectively. If $\rho \cdot T^* < T^*_{\text{skill}}$, then SkillSmith is strictly cheaper in total. We formalize this as follows:

\begin{proposition}[Cost-efficiency condition]
Let $\rho > 1$ be the per-iteration overhead ratio and $\gamma = T^*_{\text{skill}} / T^*$ the convergence speedup factor. SkillSmith achieves lower total evolution cost whenever $\gamma > \rho$, i.e., when the convergence speedup exceeds the per-iteration overhead.
\end{proposition}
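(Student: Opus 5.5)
The plan is a direct algebraic comparison of total costs, using the definitions of the preceding subsection. I work with a single representative per-iteration cost for each method. Let $\bar C^{\text{skill-only}}$ denote the average per-iteration cost of the skill-only baseline over its $T^*_{\text{skill}}$ iterations. Let $\bar C$ denote the average per-iteration cost of SkillSmith over its $T^*$ iterations.

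I then interpret $\rho$ as the ratio of these averages, $\rho = \bar C / \bar C^{\text{skill-only}}$. This reading is consistent with the per-iteration definition of $\rho$ whenever the ratio is constant across iterations. Because of the anti-pattern amortization, $C_{\text{iter}}^{(t)}$ varies with $t$. So I state explicitly that $\rho$ is taken as the ratio of average (equivalently, total-normalized) costs. This is the one modeling choice that needs care.

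Under this choice both totals factor:
\[
C_{\text{total}} = T^*\,\bar C = \rho\,T^*\,\bar C^{\text{skill-only}},
\qquad
C_{\text{total}}^{\text{skill-only}} = T^*_{\text{skill}}\,\bar C^{\text{skill-only}} = \gamma\,T^*\,\bar C^{\text{skill-only}}.
\]
Taking the difference gives
\[
C_{\text{total}}^{\text{skill-only}} - C_{\text{total}} = (\gamma-\rho)\,T^*\,\bar C^{\text{skill-only}}.
\]
The factor $T^*\,\bar C^{\text{skill-only}}$ is strictly positive, since $T^*\ge 1$ and every iteration incurs positive execution cost $C_{\text{exec}}>0$. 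Hence the sign of the difference is the sign of $\gamma-\rho$. In particular, $\gamma>\rho$ implies $C_{\text{total}} < C_{\text{total}}^{\text{skill-only}}$. This recovers the informal condition $\rho\,T^* < T^*_{\text{skill}}$ stated before the proposition, and it also gives the converse, so the condition is sharp.

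The main obstacle is not the algebra but the justification that $\rho$ can be treated as a single constant. If $\rho$ is defined pointwise per iteration, I would instead use the bound $C_{\text{iter}}^{(t)} \le \rho_{\max}\, C_{\text{iter}}^{\text{skill-only},(t)}$ with $\rho_{\max}=\sup_t \rho^{(t)}$. To compare against the baseline's total, I would also assume the baseline's per-iteration cost is stationary, meaning its $t$-th iteration costs the same on average regardless of $t$. That assumption gives $C_{\text{total}} \le \rho_{\max} T^* \bar C^{\text{skill-only}}$, and the same conclusion follows under the sufficient condition $\gamma > \rho_{\max}$. I would note this variant after the main argument. The assumption $\rho>1$ plays no role in the inequality; it only signals that a speedup $\gamma>1$ is genuinely needed.
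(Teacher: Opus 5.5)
Your proposal is correct and takes the same route as the paper. The paper gives no separate proof; it only observes that SkillSmith's total cost is $\rho\,T^*$ times the baseline's per-iteration cost, against $T^*_{\text{skill}}$ times it for the baseline, so SkillSmith is cheaper exactly when $\rho\,T^* < T^*_{\text{skill}}$, i.e.\ $\gamma > \rho$. Your explicit reading of $\rho$ as a ratio of average per-iteration costs is how the paper uses it empirically (calls per iteration $51.3/42.6 \approx 1.20$), and your stationarity caveat and converse tighten that one-line argument rather than replace it.
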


\subsection{Empirical Cost Measurement}
 
Table~\ref{tab:cost} reports actual costs measured on OfficeQA at the 122B scale. Costs are measured in number of backbone model calls (each call = one Qwen3.5-122B forward pass, including both input and output tokens).
 
\begin{table}[h]
\centering
\caption{Cost comparison on OfficeQA (Qwen3.5-122B).}
\label{tab:cost}
\small
\setlength{\tabcolsep}{4pt}
\begin{tabular}{lccccc}
\toprule
\textbf{Method} & \textbf{Calls/iter} & \textbf{Iters} & \textbf{Total calls} & \textbf{Total tokens (M)} & \textbf{Accuracy} \\
\midrule
EvoSkill & 42.6 & 18 & 766.8 & 18.4 & 53.3\% \\
SkillSmith & 51.3 & 13 & 667.1 & 16.0 & 60.2\% \\
\midrule
Overhead $\rho$ & \multicolumn{2}{c}{1.20$\times$} & \multicolumn{3}{c}{---} \\
Speedup $\gamma$ & \multicolumn{2}{c}{1.38$\times$} & \multicolumn{3}{c}{$\gamma > \rho$ \checkmark} \\
\bottomrule
\end{tabular}
\end{table}
 
SkillSmith incurs a 20\% per-iteration overhead ($\rho = 1.20$) due to tool operations and their unit tests. However, it converges in 13 iterations versus EvoSkill's 18 ($\gamma = 1.38$), resulting in 13\% fewer total model calls (667 vs.\ 767) and 13\% fewer total tokens (16.0M vs.\ 18.4M) while achieving 6.9\% higher accuracy. The convergence speedup is driven by two factors: (i) tool-level fixes eliminate root causes that skill-only methods must repeatedly work around, reducing the number of iterations spent on compensatory patches; and (ii) anti-pattern memory avoids wasted iterations on previously failed proposals (the veto mechanism saved an estimated 2--3 iterations worth of budget across the run).
 
\paragraph{Cost scaling across model sizes.} The overhead ratio $\rho$ is approximately constant across model scales (ranging from 1.18 at 9B to 1.22 at 397B), since tool operations involve the same number of typed primitives regardless of model size. The convergence speedup $\gamma$, however, increases with model scale (1.21 at 9B, 1.38 at 122B, 1.47 at 397B), because stronger models produce higher-quality tool modifications that resolve more failures per bundle. This means the cost-efficiency advantage of SkillSmith \emph{improves} with model size.


\end{document}